\def\@citecolor{blue}%
\def\@urlcolor{blue}%
\def\@linkcolor{blue}%
\def\orcidID#1{\smash{\href{http://orcid.org/#1}{\protect\raisebox{-1.25pt}{\protect\includegraphics{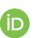}}}}}
\pgfplotsset{compat=1.15}
\newcommand{\ifempty}[3]{\ifthenelse{\equal{#1}{}}{#2}{#3}}
\definecolor{heuristiccolor}{RGB}{97,33,88}
\definecolor{commentcolor}{RGB}{60,114,26}
\definecolor{colorProbability}{RGB}{200,135,0}
\definecolor{colorAction}{RGB}{227,0,102}
\definecolor{coloraltobs}{RGB}{0,152,161}
\definecolor{colorobs}{RGB}{255,237,0}
\definecolor{colorReward}{RGB}{122,111,172}
\tikzstyle{labelnode}=[font=\footnotesize]
\tikzstyle{dist}=[circle,  inner sep=1pt, fill]
\tikzstyle{mdp}=[every label/.style={labelnode}]
\tikzset{state/.style={draw,circle,text centered,minimum size=6mm,text width=#1},state/.default=3mm}
\tikzset{ostate/.style={state=#1,rectangle, fill=colorobs!50},ostate/.default=3mm}
\tikzset{altostate/.style={state=#1,fill=coloraltobs!50},altostate/.default=3mm}
\tikzstyle{trans}=[->,semithick,labelnode]
\tikzstyle{every initial by arrow}=[inner sep=0pt] 
\tikzset{init/.style={initial #1, initial text={}, initial distance=4mm},init/.default=left}
\newcommand*{\toptionalfrac}[2][]{\ifempty{#1}{#2}{\nicefrac{#1}{#2}}}
\newcommand*{\tprob}[2][]{{\ensuremath{\color{colorProbability}\toptionalfrac[#1]{#2}}}}
\newcommand*{\tact}[1]{\ensuremath{\color{colorAction}#1}}
\newcommand*{\trew}[1]{{\ensuremath{\color{colorReward}\arraycolsep=-1pt\renewcommand*{\arraystretch}{0.75}\begin{array}{r!{\colon}l}#1\end{array}}}}
\newcommand*{\tbel}[1]{{\ensuremath{\arraycolsep=1pt\renewcommand*{\arraystretch}{0.75}\begin{array}{r!{\mapsto}l}#1\end{array}}}}
\spnewtheorem{assumption}{Assumption}{\bfseries}{\itshape}
\Crefname{figure}{Fig.}{Figs.}
\crefname{figure}{fig.}{figs.}
\Crefname{tabular}{Tab.}{Tabs.}
\crefname{tabular}{tab.}{tabs.}
\Crefname{section}{Sect.}{Sects.}
\crefname{section}{sect.}{sects.}
\Crefname{definition}{Def.}{Defs.}
\Crefname{problem}{Problem}{Problems}
\Crefname{assumption}{Assumption}{Assumptions}
\newcommand{\tool}[1]{\textsc{#1}\xspace}
\newcommand{\model}[1]{\textsf{#1}\xspace}
\newcommand{\eg}{e.g.\ }
\newcommand{\ie}{i.e.\ }
\newcommand{\tuple}[1]{\ensuremath{\left\langle #1 \right\rangle}}
\newcommand{\set}[1]{\ensuremath{\left\{ #1 \right\}}}
\newcommand{\rr}{\ensuremath{\mathbb{R}}}
\newcommand{\rrinf}{\ensuremath{\rr^\infty}}
\newcommand{\nn}{\ensuremath{\mathbb{N}}}
\newcommand{\dist}{\ensuremath{\mu}}
\newcommand{\dists}[1]{\ensuremath{\mathit{Dist(#1)}}}
\newcommand{\supp}[1]{\ensuremath{\mathit{supp}(#1)}}
\DeclareMathOperator*{\argmin}{arg\,min}
\newcommand{\nextbelief}[3]{\ensuremath{\llbracket{#1}|{#2},{#3}\rrbracket}}
\newcommand{\mdp}{\ensuremath{M}}
\newcommand{\states}{\ensuremath{S}}
\newcommand{\actions}{\ensuremath{\mathit{Act}}}
\newcommand{\transitions}{\ensuremath{\mathbf{P}}}
\newcommand{\sinit}[1][]{\ensuremath{{s#1_{\mathit{init}}}}}
\newcommand{\mdptuple}{\ensuremath{ \tuple{\states, \actions, \transitions, \sinit} }}
\newcommand{\state}{\ensuremath{s}}
\newcommand{\action}{\ensuremath{\alpha}}
\newcommand{\act}[1]{\ensuremath{\mathit{Act}\ifthenelse{\equal{#1}{}}{}{(#1)}}}
\newcommand{\post}[3][\mdp]{\ensuremath{\mathit{post}^{#1}(#2,#3)}}
\newcommand{\goalstates}{\ensuremath{G}}
\newcommand{\goalbels}{\ensuremath{\goalstates}_{\beliefs}}
\newcommand{\threshold}{\ensuremath{\lambda}}
\newcommand{\rewards}{\ensuremath{\mathbf{R}}}
\newcommand{\pomdp}{\ensuremath{\mathcal{M}}}
\newcommand{\observations}{\ensuremath{Z}}
\newcommand{\observation}{\ensuremath{z}}
\newcommand{\obsfunction}{\ensuremath{O}}
\newcommand{\obsof}[1]{\obsfunction(#1)}
\newcommand{\pomdptuple}{\ensuremath{ \tuple{\mdp,\observations,\obsfunction}}}
\newcommand{\beliefs}{\ensuremath{\mathcal{B}}}
\newcommand{\beliefstates}{\ensuremath{B}}
\newcommand{\belieftransitions}{\ensuremath{\transitions^\beliefstates}}
\newcommand{\belief}{\ensuremath{b}}
\newcommand{\beliefstate}{\belief}
\newcommand{\binit}{\ensuremath{\beliefstate_{\mathit{init}}}}
\newcommand{\beliefmdp}[1]{\ensuremath{\mathit{bel}(#1)}}
\newcommand{\beliefrewards}{\ensuremath{\rewards^\beliefstates}}
\newcommand{\bcut}{\belief_{\text{cut}}}
\newcommand{\cutaction}{\textsf{cut}}
\newcommand{\goalaction}{\textsf{goal}}
\newcommand{\cutgoals}{\ensuremath{G_{\text{cut}}}}
\newcommand{\underapprox}[1][]{\ensuremath{\underline{\valuefunc}\ifthenelse{\equal{#1}{}}{}{(#1)}}}
\newcommand{\overapprox}[1][]{\ensuremath{\overline{\valuefunc}\ifthenelse{\equal{#1}{}}{}{(#1)}}}
\newcommand{\inftyfunction}[1][]{\ensuremath{\underapprox^{\: -\infty}\ifthenelse{\equal{#1}{}}{}{(#1)}}}
\newcommand{\zerofunction}[1][]{\ensuremath{\underapprox^{\: \mathtt{0}}\ifthenelse{\equal{#1}{}}{}{(#1)}}}
\newcommand{\stateunderapprox}[1][]{\ensuremath{U\ifthenelse{\equal{#1}{}}{}{(#1)}}}
\newcommand{\precompunderapprox}[1][]{\ensuremath{\mathfrak{U}\ifthenelse{\equal{#1}{}}{}{(#1)}}}
\newcommand{\maxprecompunderapprox}[1][]{\ensuremath{\mathfrak{U}_{\text{max}}\ifthenelse{\equal{#1}{}}{}{(#1)}}}
\newcommand{\valuefunc}[1][]{\ensuremath{V_{#1}}}
\newcommand{\optvaluefunc}[1][]{\ensuremath{V^*}}
\newcommand{\clippingoptvaluefunc}[1][]{\ensuremath{V^{\text{clip}}_{\text{opt}}}}
\newcommand{\candidatebeliefset}{\ensuremath{\mathfrak{B}}}
\newcommand{\beliefselectvar}[1]{\ensuremath{a_{#1}}}
\newcommand{\stateinfimum}[1][]{\ensuremath{\mathfrak{L}\ifthenelse{\equal{#1}{}}{}{(#1)}}}
\newcommand{\resolution}{\ensuremath{\eta}}
\newcommand{\beliefgrid}[1]{\ensuremath{\beliefs^{\#}_{#1}}}
\newcommand{\genpath}{\ensuremath{\pi}}
\newcommand{\infpath}{\ensuremath{\tilde{\pi}}}
\newcommand{\finpath}{\ensuremath{\hat{\pi}}}
\newcommand{\last}[1]{\ensuremath{\mathit{last}(#1)}}
\newcommand{\infpaths}[1]{\ensuremath{\mathit{Paths}_\mathrm{inf}^{#1}}}
\newcommand{\finpaths}[1]{\ensuremath{\mathit{Paths}_\mathrm{fin}^{#1}}}
\newcommand{\paths}[1]{\ensuremath{\mathit{Paths}^{#1}}}
\newcommand{\eventually}{\ensuremath{\lozenge}}
\newcommand{\sched}{\ensuremath{\sigma}}
\newcommand{\scheds}[1]{\ensuremath{\Sigma^{#1}}}
\newcommand{\obsscheds}[1]{\ensuremath{\Sigma^{#1}_\textnormal{obs}}}
\newcommand{\pr}[4][]{\ensuremath{\mathsf{Pr}_{#2}^{#3}(#1\ifthenelse{\equal{#1}{}}{}{\models} \eventually #4)}}
\newcommand{\prbounded}[5][]{\ensuremath{\mathsf{Pr}_{#2}^{#3}(#1\ifthenelse{\equal{#1}{}}{}{\models} \eventually^{\leq #5} #4)}}
\newcommand{\probmeasure}[1]{\ensuremath{\mu_{#1}}}
\newcommand{\rew}[1]{\ensuremath{\mathsf{rew}_{#1}}}
\newcommand{\exprew}[4][]{\ensuremath{\mathsf{ER}_{#2}^{#3}(#1\ifthenelse{\equal{#1}{}}{}{\models} \eventually #4)}}
\newcommand{\boundedexprew}[5][]{\ensuremath{\mathsf{ER}_{#2}^{#3}(#1\ifthenelse{\equal{#1}{}}{}{\models} \eventually^{\leq #5} #4)}}
\newcommand{\clippingvalue}[1][\belief{\to}\tilde{\belief}]{\ensuremath{\Delta_{#1}}}
\newcommand{\stateclippingvalue}[1][\belief{\to}\tilde{\belief}]{\ensuremath{\delta_{#1}}}
\newcommand{\clippingmdp}[1]{\ensuremath{{\mathcal{K}_{#1}}}}
\newcommand{\clippingtransitions}{\ensuremath{\transitions^\mathcal{K}}}
\newcommand{\clippingstates}{\ensuremath{\states^\mathcal{K}}}
\newcommand{\clippingrewards}{\ensuremath{\rewards^\mathcal{K}}}
\newcommand{\clippingaction}{\textsf{clip}}
\newcommand*{\schednext}[1][\observation,\action]{\sched\langle#1\rangle}
  \definecolor{RWTHblue}{RGB}{0,83,159}
    \definecolor{RWTHblack}{RGB}{0,0,0}
    \definecolor{RWTHwhite}{RGB}{255,255,255}
    \definecolor{RWTHlightblue}{RGB}{142,186,226}
    \definecolor{RWTHgrey}{RGB}{51,51,51}
    \definecolor{RWTHlightgrey}{RGB}{204,204,204}
    \definecolor{RWTHsuperlightgrey}{RGB}{247,247,247}
    \definecolor{RWTHpetrol}{RGB}{0,97,101}
    \definecolor{RWTHteal}{RGB}{0,152,161}
    \definecolor{RWTHmaygreen}{RGB}{189,205,0}
    \definecolor{RWTHgreen}{RGB}{87,171,39}
    \definecolor{RWTHyellow}{RGB}{255,237,0}
    \definecolor{RWTHorange}{RGB}{246,168,0}
    \definecolor{RWTHmagenta}{RGB}{227,0,102}
    \definecolor{RWTHred}{RGB}{204,7,30}
    \definecolor{RWTHbordeaux}{RGB}{161,16,53}
    \definecolor{RWTHviolet}{RGB}{97,33,88}
    \definecolor{RWTHpurple}{RGB}{122,111,172}
    \definecolor{orcidlogocol}{HTML}{A6CE39}
\renewcommand{\paragraph}[1]{\smallskip\noindent\emph{#1.}}
\renewcommand{\subsubsection}[1]{\medskip\noindent\textbf{#1}}
\begin{document}

\title{%
Under-Approximating \\ Expected Total Rewards in POMDPs
\thanks{This work is funded by the DFG RTG 2236 ``UnRAVeL''.}
}

\author{
Alexander~Bork\inst{1}$^($\Envelope$^)$\orcidID{0000-0002-7026-228X}
\and Joost-Pieter~Katoen\inst{1}\orcidID{0000-0002-6143-1926}
\and Tim~Quatmann\inst{1}\orcidID{0000-0002-2843-5511}
}

\authorrunning{A.~Bork, J.-P.~Katoen, T.~Quatmann}
\institute{
RWTH Aachen University, Aachen, Germany\\
\email{alexander.bork@cs.rwth-aachen.de}
}

\maketitle

\begin{abstract}
We consider the problem: is the optimal expected total reward to reach a goal state in a partially observable Markov decision process (POMDP) below a given threshold?
We tackle this---generally undecidable---problem by computing under-approximations on these total expected rewards.
This is done by abstracting finite unfoldings of the infinite belief MDP of the POMDP.
The key issue is to find a suitable under-approximation of the value function.
We provide two techniques: a simple (cut-off) technique that uses a good policy on the POMDP, and a more advanced technique (belief clipping) that uses minimal shifts of probabilities between beliefs.
We use mixed-integer linear programming (MILP) to find such minimal probability shifts and experimentally show that our techniques scale quite well while providing tight lower bounds on the expected total reward.
\end{abstract}

\section{Introduction}

\paragraph{The relevance of POMDPs}
Partially observable Markov decision processes (POMDPs) originated in operations research and nowadays are a pivotal model for planning in AI~\cite{DBLP:books/aw/RN2020}.
They inherit all features of classical MDPs: each state has a set of discrete probability distributions over the states and rewards are earned when taking transitions.
However, states are \emph{not} fully observable.
Intuitively, certain aspects of the states can be identified, such as a state's colour, but states themselves cannot be observed.
This partial observability reflects, for example, a robot's view of its environment while only having the limited perspective of its sensors at its disposal.
The main goal is to obtain a policy---a plan how to resolve the non-determinism in the model---for a given objective.
The key problem here is that POMDP policies must base their decisions \emph{only} on the observable aspects (\eg colours) of states.
This stands in contrast to policies for MDPs which can make decisions dependent on the entire history of \emph{full} state information.

\paragraph{Analysing POMDPs}
Typical POMDP planning problems consider either finite-horizon objectives or infinite-horizon objectives under discounting.
Finite-horizon objectives focus on reaching a certain goal state (such as \emph{``the robot has collected all items''}) within a given number of steps.
For infinite horizons, no step bound is provided and typically rewards along a run are weighted by a discounting factor that indicates how much immediate rewards are favoured over more distant ones.
Existing techniques to treat these objectives include variations of value iteration ~\cite{sondik1971,monahan1982,eagle1984,cheng1988,zhang1998,zhang2001} and policy trees~\cite{kaelbling1998}.
Point-based techniques~\cite{pineau2003,shani2013} approximate a POMDP's value function using a finite subset of beliefs which is iteratively updated.
Algorithms include \emph{PBVI} \cite{pineau2003}, \emph{Perseus} \cite{spaan2005}, \emph{SARSOP} \cite{kurniawati2008} and \emph{HSVI} \cite{smith2004}. 
Point-based methods can treat large POMDPs for both finite- and discounted infinite-horizon objectives~\cite{shani2013}. 

\paragraph{Problem statement}
In this paper we consider the problem: \emph{is the maximal expected total reward to reach a given goal state in a POMDP below a given threshold?}
We thus consider an infinite-horizon objective \emph{without} discounting---also called an \emph{indefinite-horizon} objective.
A specific instance of the considered problem is the reachability probability to eventually reach a given goal state in a POMDP.
This problem is undecidable~\cite{madani1999,madani2003} in general.
Intuitively, this is due to the fact that POMDP policies need to consider the entire (infinite) observation history to make optimal decisions.
For a POMDP, this notion is captured by an infinite, fully observable MDP, its \emph{belief MDP}.
This MDP is obtained from observation sequences inducing probabilities of being in certain states of the POMDP.

Previously proposed methods to solve the problem are \eg to use approximate value iteration~\cite{DBLP:journals/jair/Hauskrecht00}, optimisation and search techniques~\cite{DBLP:journals/aamas/AmatoBZ10,DBLP:conf/aaai/BraziunasB04}, dynamic programming~\cite{bonet1998}, Monte Carlo simulation \cite{silver2010}, game-based abstraction \cite{winterer2017}, and machine learning \cite{carr2020,carr2019,doshi2008}.
Other approaches restrict the memory size of the policies~\cite{meuleau1999}.
The synthesis of (possibly randomised) finite-memory policies is ETR-complete\footnote{A decision problem is ETR-complete if it can be reduced to a polynomial-length sentence in the Existential Theory of the Reals (for which the satisfiability problem is decidable) in polynomial time, and there is such a reduction in the reverse direction.}~\cite{junges2018}.
Techniques to obtain finite-memory policies use \eg parameter synthesis \cite{junges2018} or satisfiability checking and SMT solving~\cite{chatterjee2016b,wang2018}.

\paragraph{Our approach}
We tackle the aforementioned problem by computing under-\linebreak approximations on maximal total expected rewards.
This is done by considering finite unfoldings of the infinite belief MDP of the POMDP, and then applying abstraction.
The key issue here is to find a suitable under-approximation of the POMDP's value function.
We provide two techniques: a simple (cut-off) technique that uses a good policy on the POMDP, and a more advanced technique (belief clipping) that uses minimal shifts of probabilities between beliefs and can be applied on top of the simple approach.
We use mixed-integer linear programming (MILP) to find such minimal probability shifts.
Cut-off techniques for indefinite-horizon objectives have been used on computation trees---rather than on the belief MDP as used here---in \emph{Goal-HSVI} \cite{horak2018}.
Belief clipping amends the probabilities in a belief to be in a state of the POMDP yielding discretised values, i.e. an abstraction of the probability range $[0,1]$ is applied.
Such grid-based approximations are inspired by Lovejoy's grid-based belief MDP discretisation method \cite{lovejoy1991}. 
They have also been used in \cite{bonet2009} in the context of dynamic programming for POMDPs, and to over-approximate the value function in model checking of POMDPs~\cite{bork2020}. 
In fact, this paper on determining lower bounds for indefinite-horizon objectives can be seen as the dual counterpart of \cite{bork2020}. 
Our key challenge---compared to the approach of \cite{bork2020}---is that the value at a certain belief cannot easily be under-approximated with a convex combination of values of nearby beliefs. On the other hand, an under-approximation can benefit from a ``good'' guess of some initial POMDP policy. In the context of \cite{bork2020}, such a guessed policy is of limited use for over-approximating values in the POMDP induced by an \emph{optimal} policy.
Although our approach is applicable to all thresholds, the focus of our work is on determining under-approximations for \emph{quantitative} objectives.
Dedicated verification techniques for the qualitative setting---almost-sure reachability---are presented in~\cite{chatterjee2010,chatterjee2016,junges2021}.

\paragraph{Experimental results}
We have implemented our cut-off and belief clipping approaches on top of the probabilistic model checker \tool{Storm}~\cite{storm} and applied it to a range of various benchmarks. 
We provide a comparison with the model checking approach in~\cite{norman2017}, and determine the tightness of our under-approximations by comparing them to over-approximations obtained using the algorithm from~\cite{bork2020}.
Our main findings from the experimental validation are:
\begin{itemize}
    \item Cut-offs often generate tight bounds while being computationally inexpensive.
    \item The clipping approach may further improve the accuracy of the approximation.
    \item Our implementation can deal with POMDPs with tens of thousands of states.
    \item Mostly, the obtained under-approximations are less than 10\% off.
\end{itemize}
\section{Preliminaries and Problem Statement}
Let $\dists{A} \colonequals \set{\dist: A \to [0,1] \ | \ \sum_{a \in A} \dist(a) = 1 }$ denote the set of probability distributions over a finite set $A$.
The set $\supp{\dist} \colonequals \set{a \in A \ | \ \dist(a) > 0}$ is the \emph{support} of $\dist \in \dists{A}$.
Let $\rrinf \colonequals \rr \cup \set{\infty, -\infty}$.
We use Iverson bracket notation, where $[x] = 1$ if the Boolean expression $x$ is \emph{true} and $[x] = 0$ otherwise.

\subsection{Partially Observable MDPs}
\begin{definition}[MDP]
\label{def:mdp}
A \emph{Markov decision process (MDP)} is a tuple $\mdp = \mdptuple$ with a (finite or infinite) set of states $\states$, a finite set of actions $\actions$, a transition function $\transitions \colon \states \times \actions \times \states \to [0,1]$ with $\sum_{\state' \in \states} \transitions(\state, \action, \state') \in \set{0,1}$ for all $\state \in \states$ and $\action \in \actions$, and an initial state $\sinit$.
\end{definition}
We fix an MDP $\mdp \colonequals \mdptuple$. For $\state \in \states$ and $\action \in \actions$, let $\post{\state}{\action} \colonequals \{\state' \in \states \mid \transitions(\state,\action,\state')>0 \}$ denote the set of $\action$-successors of $\state$ in $\mdp$.
The set of \emph{enabled actions} in $\state \in \states$ is given by $\actions(\state) \colonequals \{\action \in \actions \mid \post{\state}{\action} \neq \emptyset\}$.
\begin{definition}[POMDP]
\label{def:pomdp}
A \emph{partially observable MDP (POMDP)} is a tuple $\pomdp = \pomdptuple$, where $\mdp$ is the underlying MDP with $|\states| \in \nn$, \ie $\states$ is finite, $\observations$ is a finite set of observations, and $\obsfunction \colon \states \to \observations$ is an observation function such that $\obsfunction(\state) = \obsfunction(\state') \implies \actions(\state) = \actions(\state')$ for all $\state,\state' \in \states$.
\end{definition}
We fix a POMDP $\pomdp \colonequals \pomdptuple$ with underlying MDP $\mdp$. 
We lift the notion of enabled actions to observations $\observation \in \observations$ by setting $\actions(\observation) \colonequals \actions(\state)$ for some $\state \in \states$ with $\obsof{\state} = \observation$ which is valid since states with the same observations are required to have the same enabled actions.
The notions defined for MDPs below also straightforwardly apply to POMDPs.
\begin{remark}
More general observation functions of the form $\obsfunction: \states \times \actions \to \dists{\observations}$ can be encoded in this formalism by using a polynomially larger state space \cite{chatterjee2016}.
\end{remark}
An \emph{infinite path} through an MDP (and a POMDP) is a sequence $\infpath = \state_0 \action_1 \state_1 \action_2 \hdots$ such that $\action_{i+1} \in \act{\state_{i}}$ and $\state_{i+1} \in \post{\state_{i}}{\action_{i+1}}$ for all $i \in \nn$.
A \emph{finite path} is a finite prefix $\finpath = \state_0 \action_1   \hdots \action_n \state_n$ of an infinite path $\infpath$.
For finite $\finpath$ let $\last{\finpath} \colonequals s_n$ and $|\finpath| \colonequals n$.
For infinite $\infpath$ set $|\infpath| \colonequals \infty$ and let $\infpath[i]$ denote the finite prefix of length $i \in \nn$.
We denote the set of finite and infinite paths in $\mdp$ by $\finpaths{\mdp}$ and $\infpaths{\mdp}$, respectively. Let $\paths{\mdp} \colonequals \finpaths{\mdp} \cup \infpaths{\mdp}$. 
Paths are lifted to the observation level by \emph{observation traces}. The observation trace of a (finite or infinite) path $\genpath = \state_0 \action_1 \state_1 \action_2 \hdots \in \paths{\mdp}$ is $\obsof{\genpath} \colonequals \obsof{\state_0} \action_1 \obsof{\state_1} \action_2 \hdots$.
Two paths $\genpath,\genpath' \in \paths{\mdp}$ are \emph{observation-equivalent} if $\obsof{\genpath} = \obsof{\genpath'}$.

\emph{Policies} resolve the non-determinism present in MDPs (and POMDPs).
Given a finite path $\finpath$, a policy determines the action to take at $\last{\finpath}$.
\begin{definition}[Policy]
\label{def:policy}
A \emph{policy} for $\mdp$ is a function $\sched : \finpaths{\mdp} \to \dists{\actions}$ such that for each path $\finpath \in \finpaths{\mdp}$, $\supp{\sched(\finpath)} \subseteq \act{\last{\finpath}}$. 
\end{definition}
A policy $\sched$ is \emph{deterministic} if $| \supp{\sched(\finpath)} | = 1$ for all $\finpath \in \finpaths{\mdp}$. Otherwise it is \emph{randomised}.
$\sched$ is \emph{memoryless} if for all $\finpath, \finpath' \in \finpaths{\mdp}$ we have $ \last{\finpath} = \last{\finpath'} \implies \sched(\finpath) = \sched(\finpath')$. $\sched$ is \emph{observation-based} if for all $\finpath,\finpath' \in \finpaths{\mdp}$ it holds that $\obsof{\finpath} = \obsof{\finpath'} \implies \sched(\finpath) = \sched(\finpath')$.
We denote the set of policies for $\mdp$ by $\scheds{\mdp}$ and the set of  observation-based policies for $\pomdp$ by $\obsscheds{\pomdp}$. 
A \emph{finite-memory} policy (fm-policy) can be represented by a finite automaton where the current memory state and the state of the MDP determine the actions to take~\cite{baier2008}.

The \emph{probability measure} $\probmeasure{\mdp}^{\sched,\state}$ for paths in $\mdp$ under policy $\sched$ and initial state $\state$ is the probability measure of the Markov chain induced by $\mdp$, $\sched$, and $\state$~\cite{baier2008}.

We use \emph{reward structures} to model quantities like time, or energy consumption.
\begin{definition}[Reward Structure]
\label{def:rewardStructure}
	A \emph{reward structure} for $\mdp$ is a function $\rewards\colon  \states \times \actions \times \states \to \rr$ such that either for all $\state,\state' \in \states$, $\action \in \actions$, $\rewards(\state,\action,\state') \geq 0$ or for all $\state,\state' \in \states$, $\action \in \actions$, $\rewards(\state,\action,\state') \leq 0$ holds. In the former case, we call $\rewards$ \emph{positive}, otherwise \emph{negative}.
\end{definition}
We fix a reward structure $\rewards$ for $\mdp$.
The \emph{total reward} along a path $\genpath$ is defined as $ \rew{\mdp, \rewards}(\genpath) \colonequals \sum_{i = 1}^{|\genpath|} \rewards(\state_{i-1}, \action_{i}, \state_{i})$. 
The total reward is always well-defined---even if $\genpath$ is infinite---since all rewards are assumed to be either non-negative or non-positive.
For an infinite path $\infpath$ we define the \emph{total reward} until reaching a set of goal states $\goalstates \subseteq \states$ by
$$ \rew{\mdp, \rewards, \goalstates}(\infpath) \colonequals 
\begin{cases}
			\rew{\mdp, \rewards}(\finpath) & \text{ if }\parbox[t]{.5\textwidth}{ $\exists i \in \nn : \finpath = \infpath[i] \ \land \ \last{\finpath} \in \goalstates \ \land$ \\ $\forall j<i: \last{\infpath[j]} \notin \goalstates$,} \\
			\rew{\mdp, \rewards}(\infpath) & \text{ otherwise.}
		\end{cases}
$$ 
Intuitively, $\rew{\mdp, \rewards, \goalstates}(\infpath)$ accumulates reward along $\infpath$ until the first visit of a goal state $\state \in \goalstates$.
If no goal state is reached, reward is accumulated along the infinite path.
The \emph{expected} total reward until reaching $\goalstates$ for policy $\sched$ and state $\state$ is $$\exprew[\state]{\mdp, \rewards}{\sched}{\goalstates} \colonequals \smashoperator[r]{\int\limits_{\infpath \in \infpaths{\mdp}}}
	\rew{\mdp, \rewards, \goalstates}(\infpath) \cdot \probmeasure{\mdp}^{\sched,\state}(d\infpath).$$

Observation-based policies capture the notion that a decision procedure for a POMDP only accesses the observations and their history and not the entire state of the system. 
We are interested in reasoning about \emph{minimal} and \emph{maximal} values over \emph{all} observation-based policies.
For our explanations we focus on maximising (non-negative or non-positive) expected rewards.
Minimisation can be achieved by negating all rewards.
\begin{definition}[Maximal Expected Total Reward]
\label{def:maxExpRew}
	The \emph{maximal} expected total reward until reaching $\goalstates$ from $\state$ in POMDP $\pomdp$ is
		$$\exprew[\state]{\pomdp,\rewards}{\max}{\goalstates} \colonequals \sup_{\sched \in \obsscheds{\pomdp}} \exprew[\state]{\pomdp,\rewards}{\sched}{\goalstates}.$$
	We define $\exprew{\pomdp,\rewards}{\max}{\goalstates} \colonequals \exprew[\sinit]{\pomdp,\rewards}{\max}{\goalstates}$.
\end{definition}
The central problem of our work, the \emph{indefinite-horizon total reward problem}, asks the question whether the maximal expected total reward until reaching a goal exceeds a given threshold.
\begin{center}
\fcolorbox{black}{white}{
\parbox{0.95\textwidth}{
\vspace{-.5em}
\begin{problem}
\label{prob:pomdp}
Given a POMDP $\pomdp$, reward structure $\rewards$, set of goal states $\goalstates \subseteq \states$, and threshold $\threshold \in \rr$, decide whether $\exprew{\pomdp,\rewards}{\max}{\goalstates} \leq \threshold$.
\end{problem}
\vspace{-.5em}
}}
\end{center}

\begin{example}
\Cref{fig:pomdp} shows a POMDP $\pomdp$ with three states and two observations: $\obsof{\state_0} = \obsof{\state_1} = \tikz{\node[ostate=0mm,minimum size=2mm]{}}$ and $\obsof{\state_2} = \tikz{\node[altostate=0mm,minimum size=2mm]{}}$.
A reward of 1 is collected when transitioning from $\state_1$ to $\state_2$ via the $\tact{\beta}$-action.
All other rewards are zero.
{\makeatletter
\let\par\@@par
\par\parshape0
\everypar{}%
\begin{wrapfigure}{r}{0.27\textwidth}
\centering
\vspace{-40pt}
\begin{tikzpicture}[mdp]
\node[ostate,init] (0) {$\state_0$} ;
\node[ostate,right=1.3of 0] (1) {$\state_1$} ;
\node[altostate,below=of 0] (2) {$\state_2$} ;
\path[trans]
(0) edge node[pos=0.25,above] {\tact{\action}} node[dist] (0a) {} node[pos=0.75,above] {\tprob[1]{2}} (1)
    (0a) edge[bend left] node[below] {\tprob[1]{2}} (0)
(0) edge node[pos=0.25,left] {\tact{\beta}} node[dist] {} node[pos=0.75,left] {\tprob{1}} (2)
(1) edge node[pos=0.15,left] {\tact{\beta}} node[dist] {} node[pos=0.75,right=2pt] {\trew{\rewards & 1}} node[pos=0.65,left] {\tprob{1}} (2)
(1) edge[loop below] node[pos=0.5, right] {\tact{\action}} node[dist,yshift=2pt] {} node[pos=0.5,left] {\tprob{1}} (1)
(2) edge[loop right] node[pos=0.5, above] {\tact{\action}} node[dist,xshift=-2pt] {} node[pos=0.5,below] {\tprob{1}} (2)
;
\end{tikzpicture}
\vspace{-5pt}
\caption{POMDP~$\pomdp$}
\label{fig:pomdp}
\end{wrapfigure}
The policy that always selects $\tact{\action}$ at $\state_0$ and $\tact{\beta}$ at $\state_1$ maximizes the expected total reward to reach $\goalstates = \{\state_2\}$ but is not observation-based.
The observation-based policy that for the first $n\in \nn$ transition steps selects $\tact{\action}$ and then selects $\tact{\beta}$ afterwards yields an expected total reward of $1-(\nicefrac{1}{2})^n$. With $n \to \infty$ we obtain $\exprew{\pomdp,\rewards}{\max}{\{\state_2\}} = 1$.
\par}%
\end{example}

\noindent 
As computing maximal expected rewards exactly in POMDPs is undecidable \cite{madani2003}, we aim at under-approximating the actual value $\exprew{\pomdp,\rewards}{\max}{\goalstates}$. 
This allows us to answer our problem negatively if the computed lower bound exceeds $\threshold$.

\begin{remark}
Expected rewards can be used to describe \emph{reachability probabilities} by assigning reward 1 to all transitions entering $\goalstates$ and assigning reward 0 to all other transitions.
Our approach can thus be used to obtain lower bounds on reachability probabilities in POMDPs.
This also holds for almost-sure reachability (i.e. \emph{``is the reachability probabilty one?''}), though dedicated methods like those presented in~\cite{chatterjee2010,chatterjee2016,junges2021} are better suited for that setting.
\end{remark}

\subsection{Beliefs}
\label{sec:belMDP}
The semantics of a POMDP $\pomdp$ are captured by its (fully observable) \emph{belief MDP}.
The infinite state space of this MDP consists of \emph{beliefs} \cite{astrom1965,smallwood1973}. 
A belief is a distribution over the states of the POMDP where each component describes the likelihood to be in a POMDP state given a history of observations. 
We denote the set of all beliefs for $\pomdp$ by $\beliefs_{\pomdp} \colonequals \{\belief \in \dists{\states} \mid \forall \state,\state' \in \supp{\belief} : \obsof{\state} = \obsof{\state'}\}$ and write $\obsof{\belief} \in \observations$ for the unique observation $\obsof{\state}$ of all $\state \in \supp{\belief}$.

The belief MDP of $\pomdp$ is constructed by starting in the belief corresponding to the initial state and computing successor beliefs to unfold the MDP. 
Let $\transitions(\state,\action,\observation) \colonequals \sum_{\state' \in \states} [\obsof{\state'} = \observation] \cdot \transitions(\state,\action,\state')$ be the probability to observe $\observation \in \observations$ after taking action $\action$ in POMDP state $\state$.
Then, the probability to observe $\observation$ after taking action $\action$ in belief $\belief$ is $\transitions(\belief,\action,\observation) \colonequals \sum_{\state \in \states} \belief(s) \cdot \transitions(\state,\action,\observation)$. We refer to $\nextbelief{\belief}{\action}{\observation} \in \beliefs_{\pomdp}$---the belief after taking $\action$ in $\belief$, conditioned on observing $\observation$---as the \emph{$\action$-$\observation$-successor of $\belief$}. If $\transitions(\belief,\action,\observation) > 0$, it is defined component-wise as $$ \nextbelief{\belief}{\action}{\observation}(\state) \colonequals \frac{[\obsof{\state} = \observation] \cdot \sum_{\state' \in \states} \belief(\state') \cdot \transitions(\state',\action,\state)}{\transitions(\belief,\action,\observation)}$$
for all $\state \in \states$. Otherwise $\nextbelief{\belief}{\action}{\observation}$ is \emph{undefined}.
\begin{definition}[Belief MDP]
\label{def:beliefMDP}
The \emph{belief MDP} of $\pomdp$ is the MDP $\beliefmdp{\pomdp} = \tuple{\beliefs_{\pomdp}, \actions, \belieftransitions, \binit}$, where $\beliefs_{\pomdp}$ is the set of all beliefs in $\pomdp$, $\actions$ is as for $\pomdp$, $\binit \colonequals \set{\sinit \mapsto 1}$ is  the initial belief, and $\belieftransitions \colon \beliefs_{\pomdp} \times \actions \times \beliefs_{\pomdp} \to [0,1]$ is the belief transition function with
	$$\belieftransitions(\belief, \action, \belief') \colonequals 
	\begin{cases}
		\transitions(\belief, \action, \observation) & \text{ if } \belief' = \nextbelief{\belief}{\action}{\observation}, \\
		0 & \text{ otherwise.}
	\end{cases}$$
\end{definition}
We lift a POMDP reward structure $\rewards$ to the belief MDP~\cite{itoh2007}.
\begin{definition}[Belief Reward Structure]
\label{def:beliefRewardStructure}
For beliefs $\belief,\belief' \in \beliefs_{\pomdp}$ and action $\action \in \actions$, the \emph{belief reward structure $\beliefrewards$} based on $\rewards$ associated with $\beliefmdp{\pomdp}$ is given by
$$\beliefrewards(\belief,\action,\belief') \colonequals \frac{\sum_{\state \in \states} \belief(\state) \cdot \sum_{\state' \in \states} [\obsof{s'} = \obsof{\belief'}] \cdot \rewards(\state,\action,\state')\cdot \transitions(\state,\action,\state')}{\transitions(\belief,\action,\obsof{\belief'})}.$$
\end{definition}

Given a set of goal states $\goalstates \subseteq \states$, we assume---for simplicity---that there is a set of observations $\observations' \subseteq \observations$ such that $\state \in \goalstates$ iff $\obsof{\state} \in \observations'$.
This assumption can always be ensured by transforming the POMDP $\pomdp$.
See \Cref{app:goal} for details.
The set of \emph{goal beliefs} for $\goalstates$ is given by $\goalbels \colonequals \{\belief \in \beliefs_{\pomdp} \mid \supp{\belief} \subseteq \goalstates\}$.

We now lift the computation of expected rewards to the belief level.
Based on the well-known Bellman equations~\cite{Bellman1957}, the belief MDP induces a function that maps every belief to the expected total reward accumulated from that belief.
\begin{definition}[POMDP Value Function]
\label{def:valueFunction}
For $\belief \in \beliefs_{\pomdp}$, the \emph{$n$-step value function $\valuefunc[n]: \beliefs_{\pomdp} \to \rr$} of $\pomdp$ is defined recursively as $\valuefunc[0](\belief) \colonequals 0$ and
$$
\allowdisplaybreaks
\valuefunc[n](\belief) \colonequals 
[\belief \notin \goalbels] \cdot \max_{\action \in \actions} \smashoperator[r]{\sum_{\belief' \in \post[\beliefmdp{\pomdp}]{\belief}{\action}}} \ \belieftransitions(\belief, \action, \belief') \cdot \left(\beliefrewards(\belief, \action, \belief') + \valuefunc[n-1](\belief')\right).
$$
The \emph{(optimal) value function $\optvaluefunc: \beliefs_{\pomdp} \to \rrinf$} is given by $ \optvaluefunc(\belief) \colonequals \lim_{n \to \infty} \valuefunc[n](b)$.
\end{definition}
The $n$-step value function is piecewise linear and convex \cite{smallwood1973}. Thus, the optimal value function can be approximated arbitrarily close by a piecewise linear convex function \cite{sondik1978}.
The value function yields expected total rewards in $\pomdp$ and $\beliefmdp{\pomdp}$:
$$\exprew[\state]{\pomdp,\rewards}{\max}{\goalstates} ~=~ \exprew[\set{\state \mapsto 1}]{\beliefmdp{\pomdp},\beliefrewards}{\max}{\goalbels} ~=~ \optvaluefunc(\set{\state \mapsto 1}).$$
\begin{example}
\Cref{fig:belmdp} shows a fragment of the belief MDP of the POMDP from \Cref{fig:pomdp}.
Observe $\exprew{\beliefmdp{\pomdp},\beliefrewards}{\max}{\set{\state_2 \mapsto 1}} = 1$.
\end{example}
\begin{figure}[t]
\centering
\begin{tikzpicture}[mdp]
\node[ostate=12mm,init] (0) {\tbel{\state_0 & 1\\ \state_1 & 0}} ;
\node[ostate=12mm,right=of 0] (1) {\tbel{\state_0 & \nicefrac{1}{2}\\ \state_1 & \nicefrac{1}{2}}} ;
\node[ostate=12mm,right=of 1] (2) {\tbel{\state_0 & \nicefrac{1}{4}\\ \state_1 & \nicefrac{3}{4}}} ;
\node[ostate=12mm,right=of 2] (3) {\tbel{\state_0 & \nicefrac{1}{8}\\ \state_1 & \nicefrac{7}{8}}} ;
\node[right=of 3] (dots) {~~$\cdots$~~} ;

\node[altostate=10mm, ellipse,below=of 2] (bot) {\tbel{\state_2 & 1}} ;
\path[trans]
(0) edge node[pos=0.25,above] {\tact{\action}} node[dist] {} node[pos=0.75,above] {\tprob{1}} (1)
(1) edge node[pos=0.25,above] {\tact{\action}} node[dist] {} node[pos=0.75,above] {\tprob{1}} (2)
(2) edge node[pos=0.25,above] {\tact{\action}} node[dist] {} node[pos=0.75,above] {\tprob{1}} (3)
(3) edge node[pos=0.25,above] {\tact{\action}} node[dist] {} node[pos=0.75,above] {\tprob{1}} (dots)

(0) edge[bend right] node[pos=0.15,left=2pt] {\tact{\beta}} node[dist,pos=0.3] {} node[pos=0.6,above] {\tprob{1}} node[pos=0.7,below] {\trew{\beliefrewards & 0}} (bot)
(1) edge[bend right=15] node[pos=0.2,left=2pt] {\tact{\beta}} node[dist,pos=0.3] {} node[pos=0.6,above] {\tprob{1}} node[pos=0.85,left=4pt] {\trew{\beliefrewards & \nicefrac{1}{2}}} (bot)
(2) edge[] node[pos=0.25,left] {\tact{\beta}} node[dist,pos=0.5] {} node[pos=0.75,left] {\tprob{1}} node[pos=0.75,right] {\trew{\beliefrewards & \nicefrac{3}{4}}} (bot)
(3) edge[bend left=15] node[pos=0.2,right=2pt] {\tact{\beta}} node[dist,pos=0.3] {} node[pos=0.6,above] {\tprob{1}} node[pos=0.85,right=4pt] {\trew{\beliefrewards & \nicefrac{7}{8}}} (bot)
(bot) edge[in=-20,out=0,loop] node[pos=0.4, above] {\tact{\action}} node[dist] {} node[pos=0.6,below] {\tprob{1}} (bot)
;
\end{tikzpicture}
\caption{Belief MDP $\beliefmdp{\pomdp}$ of POMDP $\pomdp$ from \Cref{fig:pomdp}}
\label{fig:belmdp}
\end{figure}
We reformulate our problem statement to focus on the belief MDP.
\begin{center}
\fcolorbox{black}{white}{
\parbox{0.95\textwidth}{
\vspace{-.5em}
\begin{problem}[equivalent to Problem \ref{prob:pomdp}]
\label{prob:belmdp}
For a POMDP $\pomdp$, reward structure $\rewards$, goal states $\goalstates \subseteq \states$, and threshold $\threshold \in \rr$, decide whether $\optvaluefunc(\set{\sinit \mapsto 1}) \leq \threshold$.
\end{problem}
\vspace{-.5em}
}}
\end{center}
As the belief MDP is fully observable, standard results for MDPs apply.
However, an exhaustive analysis of $\beliefmdp{\pomdp}$ is intractable since the belief MDP is---in general---infinitely large%
\footnote{The set of all beliefs---\ie the state space of $\beliefmdp{\pomdp}$---is uncountable. The reachable fragment is countable, though, since each belief has at most $|\observations|$ many successors.}.
\section{Finite Exploration Under-Approximation}
Instead of approximating values directly on the POMDP, we consider approximations of the corresponding belief MDP. The basic idea is to construct a finite abstraction of the belief MDP by unfolding parts of it and approximate values at beliefs where we decide not to explore. In the resulting finite MDP, under-approximative expected reward values can be computed by standard model checking techniques.
We present two approaches for abstraction: \emph{belief cut-offs} and \emph{belief clipping}.
We incorporate those techniques into an algorithmic framework that yields arbitrarily tight under-approximations.

Formal proofs of our claims are given in \Cref{app:proof}.

\subsection{Belief Cut-Offs}
\label{sec:belCutoff}
The general idea of \emph{belief cut-offs} is to stop exploring the belief MDP at certain beliefs---the \emph{cut-off beliefs}---and assume that a goal state is immediately reached while sub-optimal reward is collected. Similar techniques have been discussed in the context of fully observable MDPs and other model types \cite{brazdil2014,jansen2016,volk2017,ashok2018}. Our work adapts the idea of cut-offs for POMDP \emph{over-approximations} described in \cite{bork2020} to under-approximations.
The main idea of belief cut-offs shares similarities with the \emph{SARSOP} \cite{kurniawati2008} and \emph{Goal-HSVI} \cite{horak2018} approaches. While they apply cut-offs on the level of the computation tree, our approach directly manipulates the belief MDP to yield a finite model.

Let $\underapprox: \beliefs_{\pomdp} \to \rrinf$ with $\underapprox(\belief) \leq \optvaluefunc(\belief)$ for all $\belief \in \beliefs_{\pomdp}$. We call $\underapprox$ an \emph{under-approximative value function} and $\underapprox(\belief)$ the \emph{cut-off value} of $\belief$. In each of the cut-off beliefs $\belief$, instead of adding the regular transitions to its successors, we add a transition with probability 1 to a dedicated goal state $\bcut$. In the modified reward structure $\rewards'$, this \emph{cut-off transition} is assigned a reward%
\footnote{We slightly deviate from \Cref{def:rewardStructure} by allowing transition rewards to be $-\infty$ or $+\infty$. Alternatively, we could introduce new sink states with a non-zero self-loop reward.}
 of $\underapprox(\belief)$, causing the value for a cut-off belief $\belief$ in the modified MDP to coincide with $\underapprox(\belief)$.
Hence, the exact value of the cut-off belief---and thus the value of all other explored beliefs---is under-approximated.

\begin{example}
\Cref{fig:cut} shows the resulting \emph{finite} MDP obtained when considering the belief MDP from \Cref{fig:belmdp} with single cut-off belief $\belief = \set{\state_0 \mapsto \nicefrac{1}{4},~\state_1 \mapsto \nicefrac{3}{4}}$.
\end{example}
\begin{figure}[t]
\centering
\begin{tikzpicture}[mdp]
\node[ostate=12mm,init] (0) {\tbel{\state_0 & 1\\ \state_1 & 0}} ;
\node[ostate=12mm,right=of 0] (1) {\tbel{\state_0 & \nicefrac{1}{2}\\ \state_1 & \nicefrac{1}{2}}} ;
\node[ostate=12mm,right=of 1] (2) {\tbel{\state_0 & \nicefrac{1}{4}\\ \state_1 & \nicefrac{3}{4}}} ;

\node[altostate=10mm, ellipse,right=2 of 2] (cut) {$\bcut$} ;
\node[altostate=10mm, ellipse,below=of 2] (bot) {\tbel{\state_2 & 1}} ;

\path[trans]
(0) edge node[pos=0.25,above] {\tact{\action}} node[dist] {} node[pos=0.75,above] {\tprob{1}} (1)
(1) edge node[pos=0.25,above] {\tact{\action}} node[dist] {} node[pos=0.75,above] {\tprob{1}} (2)
(2) edge node[pos=0.25,above] {\tact{\cutaction}} node[dist] {} node[pos=0.75,above] {\tprob{1}} node[pos=0.75, below] {\trew{\rewards' & \underapprox(\belief)}} (cut)

(0) edge[bend right] node[pos=0.15,left=2pt] {\tact{\beta}} node[dist,pos=0.3] {} node[pos=0.6,above] {\tprob{1}} node[pos=0.7,below] {\trew{\rewards' & 0}} (bot)
(1) edge[bend right=15] node[pos=0.2,left=2pt] {\tact{\beta}} node[dist,pos=0.3] {} node[pos=0.6,above] {\tprob{1}} node[pos=0.85,left=4pt] {\trew{\rewards' & \nicefrac{1}{2}}} (bot)
(bot) edge[in=-10,out=10,loop] node[pos=0.4, above] {\tact{\action}} node[dist] {} node[pos=0.6,below] {\tprob{1}} (bot)
(cut) edge[in=-10,out=10,loop] node[pos=0.4, above] {\tact{\cutaction}} node[dist] {} node[pos=0.6,below] {\tprob{1}} (cut)
;
\end{tikzpicture}
\caption{Applying belief cut-offs to the belief MDP from \Cref{fig:belmdp}}
\label{fig:cut}
\end{figure}
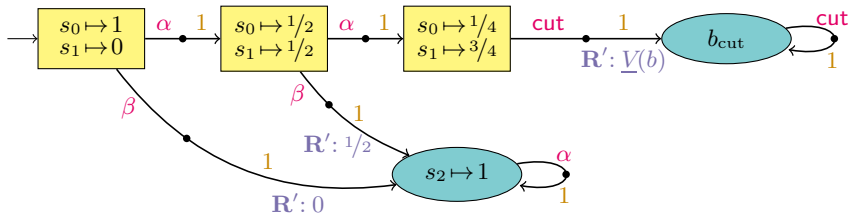

\paragraph{Computing cut-off values} The question of finding a suitable under-approximative value function $\underapprox$ is central to the cut-off approach. 
For an effective approximation, such a function should be easy to compute while still providing values close to the optimum.
If we assume a positive reward structure, the constant value $0$ is always a valid under-approximation.
A more sophisticated approach is to compute suboptimal expected reward values for the states of the POMDP using \emph{some} arbitrary, fixed observation-based policy $\sched \in \obsscheds{\pomdp}$.
Let $\stateunderapprox^\sched: \states \to \rrinf$ such that for all $\state \in \states$, $\stateunderapprox^\sched(\state) = \exprew[\state]{\pomdp,\rewards}{\sched}{\goalstates}$. 
Then, we define the function $\precompunderapprox^\sched: \beliefs_{\pomdp} \to \rrinf$ as $\precompunderapprox^\sched(\belief) \colonequals \sum_{\state \in \supp{\belief}} \belief(\state) \cdot \stateunderapprox^\sched(\state)$.
\begin{restatable}{lemma}{lemPrecompValueFunc}
\label{lem:precompValueFunc}
$\precompunderapprox^\sched$ is an under-approximative value function, \ie for all $\belief \in \beliefs_{\pomdp}$:
$$\precompunderapprox^\sched(\belief) \colonequals \sum_{\state \in \supp{\belief}} \belief(\state) \cdot \stateunderapprox^\sched(\state) \leq \optvaluefunc(\belief).$$
\end{restatable}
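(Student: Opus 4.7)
The plan is to interpret the right-hand expression as the expected total reward obtained by running the fixed observation-based policy $\sched$ starting from the mixed initial distribution $\belief$, and then invoke optimality of $\optvaluefunc$ over observation-based policies.

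First, fix $\belief \in \beliefs_{\pomdp}$ and the observation-based policy $\sched$ used to define $\stateunderapprox^\sched$. By linearity of the integral defining $\exprew[\cdot]{\pomdp,\rewards}{\sched}{\goalstates}$ with respect to the initial state, one obtains
\[
\sum_{\state \in \supp{\belief}} \belief(\state)\cdot\stateunderapprox^\sched(\state)
\;=\;
\sum_{\state \in \supp{\belief}} \belief(\state)\cdot\exprew[\state]{\pomdp,\rewards}{\sched}{\goalstates},
\]
which is the expected total reward (under $\sched$) of the stochastic process that first samples its initial POMDP state from $\belief$ and then proceeds by applying $\sched$. Because all states in $\supp{\belief}$ share the same observation $\obsof{\belief}$, the policy $\sched$ is well-defined on every such history and yields the same initial action distribution regardless of which $\state \in \supp{\belief}$ was sampled.

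Second, I would transfer this mixed-initial-state process to the belief MDP $\beliefmdp{\pomdp}$ started at $\belief$. An observation-based policy $\sched \in \obsscheds{\pomdp}$ naturally induces a policy $\sched^\belief$ on $\beliefmdp{\pomdp}$: given a finite path of beliefs in $\beliefmdp{\pomdp}$ starting at $\belief$, extract the associated action-observation trace and feed it to $\sched$. Using the construction of $\belieftransitions$ and $\beliefrewards$ (Definitions of Belief MDP and Belief Reward Structure), a standard coupling argument—grouping POMDP paths by their observation traces—shows that the expected total reward of $\sched^\belief$ in $\beliefmdp{\pomdp}$ from $\belief$ coincides with the mixed-initial-state POMDP expectation above, i.e.
\[
\precompunderapprox^\sched(\belief)
\;=\;
\exprew[\belief]{\beliefmdp{\pomdp},\beliefrewards}{\sched^\belief}{\goalbels}.
\]

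Third, since $\optvaluefunc(\belief) = \exprew[\belief]{\beliefmdp{\pomdp},\beliefrewards}{\max}{\goalbels}$ (as stated after Definition of the POMDP Value Function, where the supremum ranges over observation-based policies), the specific policy $\sched^\belief$ can only achieve at most this supremum, yielding $\precompunderapprox^\sched(\belief) \leq \optvaluefunc(\belief)$, as required.

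The main obstacle is the middle step: rigorously matching the POMDP process with initial distribution $\belief$ to the belief MDP process started at $\belief$ under $\sched^\belief$, and verifying that the transition rewards $\beliefrewards$ yield exactly the \emph{averaged} POMDP reward along observation-equivalent path bundles. This is standard POMDP lore but requires careful accounting: one partitions infinite POMDP paths by their observation traces, uses that $\sched$ is observation-based (so the conditional distribution of actions along a bundle depends only on the trace), and then checks that $\belieftransitions$ and $\beliefrewards$ reproduce, respectively, the posterior state distribution after each observation and the induced expected transition reward. Once this identification is in place, the inequality reduces to the trivial fact that a specific policy cannot exceed the optimum.
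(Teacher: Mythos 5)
Your proposal follows essentially the same route as the paper: the paper's auxiliary Lemma~\ref{lem:valsched} is precisely the ``middle step'' identity you isolate, namely $\optvaluefunc_\sched(\belief) = \sum_{\state \in \supp{\belief}} \belief(\state) \cdot \exprew[\state]{\pomdp,\rewards}{\sched}{\goalstates}$, after which both arguments conclude by bounding the value of the specific policy $\sched$ by the supremum over observation-based policies. The only difference is cosmetic: the paper establishes that identity by induction on the $n$-step value functions $\valuefunc[n]^\sched$ and $W_n^\sched$ via the Bellman recursions rather than by the path-bundle coupling you sketch, but the content is the same.
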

\noindent Thus, finding a suitable under-approximative value function reduces to finding ``good'' policies for $\pomdp$, \eg by using randomly guessed fm-policies, machine learning methods~\cite{carr2020}, or a transformation to a parametric model~\cite{junges2018}.

\subsection{Belief Clipping}
\label{sec:belClipping}
The cut-off approach provides a universal way to construct an MDP which under-approximates the expected total reward value for a given POMDP. The quality of the approximation, however, is highly dependent on the under-approximative value function used. Furthermore, regions where the belief MDP slowly converges towards a belief may pose problems in practice.

As a potential remedy for these problems, we propose a different concept called \emph{belief clipping}.
Intuitively, the procedure shifts some of the probability mass of a belief $\belief$ in order to transform $\belief$ to another belief $\tilde{\belief}$.
We then connect $\belief$ to $\tilde{\belief}$ in a way that the accuracy of our approximation of the value $\optvaluefunc(\belief)$ depends only on the approximation of $\optvaluefunc(\tilde{\belief})$ and the so-called \emph{clipping value}---some notion of distance between $\belief$ and $\tilde{\belief}$ that we discuss below.
We can thus focus on exploring the successors of $\tilde{\belief}$ to obtain good approximations for both beliefs $\belief$ and $\tilde{\belief}$.
\begin{definition}[Belief Clip]\label{def:beliefclip}
For $\belief \in \beliefs_\pomdp$, we call $\dist \colon \supp{\belief} \to [0,1]$ a \emph{belief clip} if  $\forall \state \in \supp{\belief}\colon \dist(\state) \le \belief(\state)$ and $\sum(\dist) \colonequals \sum_{\state \in \supp{\belief}} \dist(\state) < 1$.
The belief  $(\belief \ominus \dist) \in \beliefs_\mdp $ induced by $\dist$ is defined by
\[
\forall \state \in \supp{\belief} \colon ~ (\belief \ominus \dist)(\state) ~\colonequals~ \frac{\belief(\state) - \dist(\state)}{1 - \sum(\dist)}.
\]
\end{definition}
Intuitively, a belief clip $\dist$ for $\belief$ describes for each $\state \in \supp{\belief}$ the probability mass that is removed (``clipped away'') from $\belief(\state)$. The induced belief is obtained when normalising the resulting values so that they sum up to one.
\begin{example}\label{ex:beliefclip}
For belief $\belief = \set{\state_0 \mapsto \nicefrac{1}{4}, \state_1 \mapsto \nicefrac{3}{4}}$, consider the two belief clips $\dist_1 = \set{\state_0 \mapsto \nicefrac{1}{4}, \state_1 \mapsto \nicefrac{1}{4}}$ and $\dist_2 = \set{\state_0 \mapsto \nicefrac{1}{4}, \state_1 \mapsto 0}$. Both induce the same belief: $(\belief \ominus\dist_1) = (\belief \ominus\dist_2) =  \set{\state_0 \mapsto 0, \state_1 \mapsto 1}$.
\end{example}
We have $\supp{(\belief \ominus \dist)} \subseteq \supp{\belief}$, which also implies  $\obsof{(\belief \ominus \dist)}= \obsof{\belief}$.
Given some candidate belief $\tilde{\belief}$, consider the set of inducing belief clips:
$$\mathcal{C}(\belief,\tilde{\belief}) \colonequals \set{\dist \colon \supp{\belief} \to [0,1] \mid \dist \text{ is a belief clip for } \belief \text{ with } \tilde{\belief} = (\belief \ominus \dist)}.$$
Belief $\tilde{\belief}$ is called an adequate clipping candidate for $\belief$ iff $\mathcal{C}(\belief,\tilde{\belief}) \neq \emptyset$.
\begin{definition}[Clipping Value]
\label{def:clippingvalue}
For $\belief \in \beliefs_\pomdp$ and adequate clipping candidate $\tilde{\belief}$, the \emph{clipping value} is $\clippingvalue \colonequals \sum(\stateclippingvalue)$, where $\stateclippingvalue \colonequals \argmin_{\dist \in \mathcal{C}(\belief,\tilde{\belief})} \sum(\dist)$.
The values $\stateclippingvalue(\state)$ for $\state \in \supp{\belief}$ are the \emph{state clipping values}.
\end{definition}
Given a belief $\belief$ and an adequate clipping candidate $\tilde{\belief}$, we outline how the notion of belief clipping is used to obtain valid under-approximations.
We assume $\belief \neq \tilde{\belief}$, implying $0 < \clippingvalue < 1$.
Instead of exploring all successors of $\belief$ in $\beliefmdp{\pomdp}$, the approach is to add a transition from $\belief$ to $\tilde{\belief}$.
The newly added transition has probability $1-\clippingvalue$ and gets assigned a reward of 0.
The remaining probability mass (\ie $\clippingvalue$) leads to a designated goal state $\bcut$.
To guarantee that---in general---the clipping procedure yields a valid under-approximation, we need to add a corrective reward value to the transition from $\belief$ to $\bcut$.
Let $\stateinfimum: \states \to \rrinf$ which maps each POMDP state to its \emph{minimum} expected reward in the underlying, fully observable  MDP $\mdp$ of $\pomdp$%
\footnote{When rewards are negative, we might have  $\stateinfimum(\state) = -\infty$ for many $\state \in \states\setminus\goalstates$ in which case the applicability of the clipping approach is very limited.
}%
, \ie $\stateinfimum(\state) = \exprew[\state]{\mdp, \rewards}{\min}{\goalstates}$. This function soundly under-approximates the state values which can be achieved by \emph{any} observation-based policy.
It can be generated using standard MDP analysis.
Given state clipping values $\stateclippingvalue(\state)$ for $\state \in \supp{\belief}$, the reward for the transition from $\belief$ to $\bcut$ is $\sum_{\state \in \supp{\belief}} (\stateclippingvalue(\state) / \clippingvalue) \cdot \stateinfimum[\state].$
\begin{example}
For the belief MDP from \Cref{fig:belmdp}, belief $\belief = \set{\state_0 \mapsto \nicefrac{1}{4},~\state_1 \mapsto \nicefrac{3}{4}}$, and clipping candidate $\tilde{\belief} =  \set{\state_0 \mapsto 0,~\state_1 \mapsto 1}$ we get $\clippingvalue = \nicefrac{1}{4}$, as
$\stateclippingvalue = \dist_2 = \set{\state_0 \mapsto \nicefrac{1}{4},~\state_1\mapsto 0}$ with the belief clip $\dist_2$ as in \Cref{ex:beliefclip}. Furthermore, $\stateinfimum(\state_0) = 0$.
The resulting MDP following our construction above is given in \Cref{fig:clip}.
\end{example}
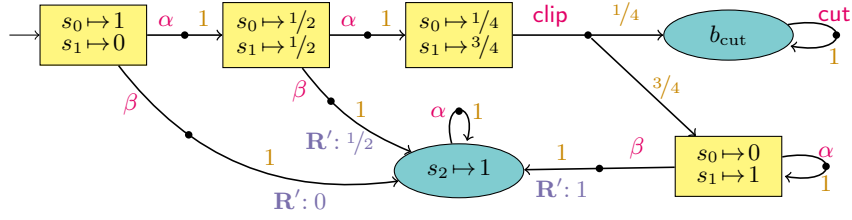
\begin{figure}[t]
\centering
\begin{tikzpicture}[mdp]
\node[ostate=12mm,init] (0) {\tbel{\state_0 & 1\\ \state_1 & 0}} ;
\node[ostate=12mm,right=of 0] (1) {\tbel{\state_0 & \nicefrac{1}{2}\\ \state_1 & \nicefrac{1}{2}}} ;
\node[ostate=12mm,right=of 1] (2) {\tbel{\state_0 & \nicefrac{1}{4}\\ \state_1 & \nicefrac{3}{4}}} ;
\node[altostate=10mm, ellipse,right=2 of 2] (cut) {$\bcut$} ;
\node[ostate=12mm,below=of cut] (3) {\tbel{\state_0 & 0\\ \state_1 & 1}} ;
\node[altostate=10mm, ellipse,below=of 2] (bot) {\tbel{\state_2 & 1}} ;
\path[trans]
(0) edge node[pos=0.25,above] {\tact{\action}} node[dist] {} node[pos=0.75,above] {\tprob{1}} (1)
(1) edge node[pos=0.25,above] {\tact{\action}} node[dist] {} node[pos=0.75,above] {\tprob{1}} (2)
(2) edge node[pos=0.25,above] {\tact{\clippingaction}} node[dist] (2a) {} node[pos=0.75,above] {\tprob[1]{4}}  (cut)
(2a) edge node[pos=0.5,right] {\tprob[3]{4}}  (3)
(0) edge[bend right] node[pos=0.15,left=2pt] {\tact{\beta}} node[dist,pos=0.3] {} node[pos=0.6,above] {\tprob{1}} node[pos=0.7,below] {\trew{\rewards' & 0}} (bot)
(1) edge[bend right=15] node[pos=0.2,left=2pt] {\tact{\beta}} node[dist,pos=0.3] {} node[pos=0.6,above] {\tprob{1}} node[pos=0.85,left=4pt] {\trew{\rewards' & \nicefrac{1}{2}}} (bot)
(3) edge[] node[pos=0.25,above] {\tact{\beta}} node[dist,pos=0.5] {} node[pos=0.75,above] {\tprob{1}} node[pos=0.75,below] {\trew{\rewards' &1}} (bot)
(3) edge[in=-10,out=10,loop] node[pos=0.5, above] {\tact{\action}} node[dist] {} node[pos=0.5,below] {\tprob{1}} (3)
(bot) edge[loop above] node[pos=0.4, left] {\tact{\action}} node[dist,yshift=-2pt] {} node[pos=0.6,right] {\tprob{1}} (bot)
(cut) edge[in=-10,out=10,loop] node[pos=0.4, above] {\tact{\cutaction}} node[dist] {} node[pos=0.6,below] {\tprob{1}} (cut)
;
\end{tikzpicture}
\caption{Applying belief clipping to the belief MDP from \Cref{fig:belmdp}}
\label{fig:clip}
\end{figure}
The following lemma shows that the construction yields an under-approximation.
\begin{restatable}{lemma}{lemClip}\label{lem:clip}
$\displaystyle(1-\clippingvalue) \cdot \optvaluefunc(\tilde{\belief}) ~+~ \clippingvalue  \cdot \!\sum_{\state \in \supp{\belief}} \frac{\stateclippingvalue(\state)}{\clippingvalue} \cdot \stateinfimum[\state] ~\le~\optvaluefunc(\belief)$.
\end{restatable}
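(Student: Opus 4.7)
The plan is to exploit the linearity of expected total reward in the initial state distribution together with the additive decomposition of $\belief$ in terms of $\tilde\belief$ and the state clipping values. First I would verify the pointwise identity
\begin{equation*}
\belief(\state) \;=\; (1-\clippingvalue) \cdot \tilde{\belief}(\state) \;+\; \stateclippingvalue(\state) \qquad \text{for all } \state \in \supp{\belief},
\end{equation*}
which follows directly from $\tilde\belief = \belief \ominus \stateclippingvalue$ (see \Cref{def:beliefclip}) and $\clippingvalue = \sum(\stateclippingvalue)$. Intuitively, this presents $\belief$ as a weighted mixture of $\tilde\belief$ (with total weight $1-\clippingvalue$) and a residual ``distribution'' with unnormalised weights $\stateclippingvalue(\cdot)$ of total mass $\clippingvalue$.

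Next, for an arbitrary $\varepsilon > 0$ I would fix an observation-based policy $\sched \in \obsscheds{\pomdp}$ satisfying $\exprew[\tilde\belief]{\pomdp,\rewards}{\sched}{\goalstates} \geq \optvaluefunc(\tilde\belief) - \varepsilon$; such a policy exists because $\optvaluefunc(\tilde\belief)$ is a supremum. Since $\obsof{\tilde\belief}=\obsof{\belief}$, the same $\sched$ is also well-defined starting from $\belief$. The crucial observation is that under a \emph{fixed} policy, expected total reward depends linearly on the initial distribution: for any belief $\belief'$,
\begin{equation*}
\exprew[\belief']{\pomdp,\rewards}{\sched}{\goalstates} \;=\; \sum_{\state \in \supp{\belief'}} \belief'(\state)\cdot\exprew[\state]{\pomdp,\rewards}{\sched}{\goalstates},
\end{equation*}
which follows by disintegrating the induced path measure $\probmeasure{\pomdp}^{\sched,\belief'}$ over the choice of initial state.

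Substituting the decomposition of $\belief$ into this linearity relation splits $\exprew[\belief]{\pomdp,\rewards}{\sched}{\goalstates}$ into $(1-\clippingvalue)\cdot \exprew[\tilde\belief]{\pomdp,\rewards}{\sched}{\goalstates}$ plus the residual $\sum_{\state \in \supp{\belief}}\stateclippingvalue(\state)\cdot \exprew[\state]{\pomdp,\rewards}{\sched}{\goalstates}$. For each $\state$, the per-state value satisfies $\exprew[\state]{\pomdp,\rewards}{\sched}{\goalstates} \geq \stateinfimum(\state)$, since $\sched$, viewed as a (history-dependent) policy in the underlying MDP $\mdp$, is one of the policies over which the minimum defining $\stateinfimum(\state) = \exprew[\state]{\mdp,\rewards}{\min}{\goalstates}$ is taken. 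Combining this with $\optvaluefunc(\belief) \geq \exprew[\belief]{\pomdp,\rewards}{\sched}{\goalstates}$ and letting $\varepsilon \to 0$ yields the claim after pulling a factor of $\clippingvalue$ out of the residual sum.

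The main obstacle is bookkeeping rather than a deep argument: care is needed with infinite values (if $\optvaluefunc(\tilde\belief) = \infty$, the $\varepsilon$-approximation must be replaced by a diverging sequence $\sched_n$; if $\stateinfimum(\state) = -\infty$ for some $\state$ with $\stateclippingvalue(\state)>0$, the inequality becomes vacuous), and with the support inclusion $\supp{\tilde\belief} \subseteq \supp{\belief}$, which merely means sums can be taken over whichever of the two supports is convenient because the extra summands vanish. Beyond these technicalities, the proof is a direct substitution.
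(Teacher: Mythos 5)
Your proposal is correct and follows essentially the same route as the paper: decompose $\belief(\state) = (1-\clippingvalue)\cdot\tilde{\belief}(\state) + \stateclippingvalue(\state)$, use the linearity of the value under a fixed observation-based policy in the initial belief (the paper's auxiliary Lemma~\ref{lem:valsched}, which you justify by disintegrating the path measure over the initial state), bound the residual terms by $\stateinfimum(\state)$, and then optimise over policies. Your use of an $\varepsilon$-optimal policy for $\tilde{\belief}$ followed by $\varepsilon \to 0$ is just a cosmetic variant of the paper's final supremum step.
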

\begin{proof}[sketch]
To gain some intuition, consider the special case, where $\clippingvalue = \stateclippingvalue(\state) = \belief(\state)$ for some $\state \in \supp{\belief}$. 
The clipping candidate $\tilde{\belief}$ can be interpreted as the conditional probability distribution arising from distribution $\belief$ given that $\state$ is \emph{not} the current state.
The value $\optvaluefunc(\belief)$ can be split into the sum of
\begin{enumerate*}[label=(\roman*)]
    \item the probability that $\state$ is \emph{not} the current state times the reward accumulated from belief $\tilde{\belief}$ and
    \item the probability that $\state$ \emph{is} the current state  times the reward accumulated from $\state$, \ie from the belief $\set{\state \mapsto 1}$.
\end{enumerate*}
However, for the two summands we must consider a policy that does not distinguish between the beliefs  $\belief$, $\tilde{\belief}$, and $\set{\state \mapsto 1}$ as well as their observation-equivalent successors.
In other words, the same sequence of actions must be executed when the same observations are made.

We consider such a policy that in addition is optimal at $\tilde{\belief}$, \ie the reward accumulated from $\tilde{\belief}$ is equal to $\optvaluefunc(\tilde{\belief})$.
For the reward accumulated from $\set{\state \mapsto 1}$, $\stateinfimum[\state]$ provides a lower bound.
Hence, $(1-\belief(\state)) \cdot \optvaluefunc(\belief) + \belief(\state) \cdot \stateinfimum[\state]$ is a lower bound for the reward accumulated from $\belief$.
A formal proof is given in \Cref{app:clip}.
\end{proof}
To find a suitable clipping candidate for a given belief $\belief$, we consider a finite \emph{candidate set} $\candidatebeliefset \subseteq \beliefs_{\pomdp}$ consisting of beliefs with observation $\obsof{\belief}$. These beliefs do not need to be reachable in the belief MDP. The set can be constructed, \eg by taking already explored beliefs or by using a fixed, discretised set of beliefs.

We are interested in minimising the clipping value $\clippingvalue[\belief\to\belief']$ over all candidate beliefs $\belief' \in \candidatebeliefset$.
A naive approach is to explicitly compute all clipping values for all candidates. We are using \emph{mixed-integer linear programming (MILP)}~\cite{schrijver1986} instead.\\
An MILP is a system of linear inequalities (\emph{constraints}) and a linear \emph{objective function} considering real-valued and integer variables.
A \emph{feasible solution} of the MILP is a variable assignment that satisfies all constraints.
An \emph{optimal solution} is a feasible solution that minimises the objective function.

\begin{definition}[Belief Clipping MILP]\label{def:clippingvalueMILP}
The belief clipping MILP for belief $\belief \in \beliefs_{\pomdp}$ and finite set of candidates $\candidatebeliefset \subseteq \set{\belief' \in \beliefs_{\pomdp} \mid \obsof{\belief'} = \obsof{\belief}}$ is given by:
\begin{center}
\begin{constraints}
\lpminimize{\clippingvalue[]}
\lpline{}{}{\sum_{\belief' \in \candidatebeliefset} \beliefselectvar{\belief'}}{= 1}{Select exactly one candidate $\belief'$}{eq:exactlyOne}
\lpline{}{\forall \belief' \in \candidatebeliefset\colon}{\beliefselectvar{\belief'}}{\in \set{0,1} }{}{eq:binary}
\lpline{}{}{\sum_{\state \in \supp{\belief}} \stateclippingvalue[\state]}{=\clippingvalue[]}{Compute clipping value for selected  $\belief'$}{eq:sum}
\lpline{}{\forall \state \in \supp{\belief}\colon}{\stateclippingvalue[\state]}{\in [0,\,\belief(s)]}{}{eq:stateclipbnds}
\lpline{I}{\forall \belief' \in \candidatebeliefset\colon}{\stateclippingvalue[\state]}{\ge \belief(\state)- (1 - \clippingvalue[]) \cdot \belief'(\state) - (1 - \beliefselectvar{\belief'})}{}{eq:linClip}
\end{constraints}
\end{center}
\end{definition}
The MILP consists of $\mathcal{O}(|\supp{\belief}| + |\candidatebeliefset|)$ variables and $\mathcal{O}(|\supp{\belief}| \cdot |\candidatebeliefset|)$ constraints.
For $\belief' \in \candidatebeliefset$, the binary variable $\beliefselectvar{\belief'}$ indicates whether $\belief'$ has been chosen as the clipping candidate.
Moreover, we have variables $\stateclippingvalue[\state]$ for $\state \in \supp{\belief}$ and a variable $\clippingvalue[]$ to represent the (state) clipping values for $\belief$ and the chosen candidate $\belief'$.
\Cref{eq:exactlyOne,eq:binary} enforce that exactly one of the $\beliefselectvar{\belief'}$ variables is one, \ie exactly one belief is chosen.
\Cref{eq:sum} forces $\clippingvalue[]$ to be the sum of all state clipping values.
$\stateclippingvalue[\state]$ variables get a value between zero and $\belief(s)$ (\Cref{eq:stateclipbnds}).
\Cref{eq:linClip} only affects $\stateclippingvalue[\state]$ if the corresponding belief is chosen. Otherwise, $\beliefselectvar{\belief'}$ is set to $0$ and the value on the right-hand side becomes negative.
If a belief $\belief'$ is chosen, the minimisation forces \Cref{eq:linClip} to hold with equality as the right-hand side is greater or equal to $0$. Assuming $\clippingvalue[]$ is set to a value below 1, we obtain a valid clipping values as
\[
\forall \state \in \supp{\belief} \colon \quad 
\stateclippingvalue[\state] = \belief(\state)- (1 - \clippingvalue[]) \cdot \belief'(\state)
\quad\iff\quad
\belief'(\state) = \frac{\belief(\state) - \stateclippingvalue[\state]}{1 - \clippingvalue[]}.
\]
A trivial solution of the MILP is always obtained by setting  $\beliefselectvar{\belief'}$ and $\clippingvalue[]$ to $1$ and $\stateclippingvalue[\state]$ to $\belief(\state)$ for all $\state$ and an arbitrary $\belief' \in \candidatebeliefset$. This corresponds to an invalid belief clip. However, as we minimise the value for $\clippingvalue[]$, we can conclude that \emph{no} belief in the candidate set is adequate for clipping if $\clippingvalue[]$ is $1$ in an optimal solution.

\begin{restatable}{theorem}{thmMILP}
\label{thm:MILP}
An optimal solution to the belief clipping MILP for belief $\belief$ and candidate set $\candidatebeliefset$ sets $\beliefselectvar{\tilde{\belief}}$ to 1 and $\clippingvalue[]$ to a value below 1 iff $\tilde{\belief} \in \candidatebeliefset$ is an adequate clipping candidate for $\belief$ with minimal clipping value.
\end{restatable}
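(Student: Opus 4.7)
The plan is to establish a correspondence between feasible MILP solutions in which a given $\tilde{\belief}\in\candidatebeliefset$ is selected (i.e., $\beliefselectvar{\tilde{\belief}}=1$) with $\clippingvalue[]<1$, on the one hand, and belief clips $\dist\in\mathcal{C}(\belief,\tilde{\belief})$, on the other. Since the MILP is always feasible via the trivial assignment with $\clippingvalue[]=1$ and $\candidatebeliefset$ is finite, an optimal solution exists; the theorem then reduces to showing that the minimum of $\clippingvalue[]$ over solutions with $\beliefselectvar{\tilde{\belief}}=1$ and $\clippingvalue[]<1$ coincides with $\clippingvalue[\belief\to\tilde{\belief}]$ whenever $\tilde{\belief}$ is adequate, and that no such solution exists otherwise.

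First, I would fix $\tilde{\belief}\in\candidatebeliefset$ and analyse feasible solutions with $\beliefselectvar{\tilde{\belief}}=1$. \Cref{eq:exactlyOne,eq:binary} force $\beliefselectvar{\belief'}=0$ for every $\belief'\ne\tilde{\belief}$, so the right-hand side of \Cref{eq:linClip} for such $\belief'$ is at most $\belief(\state)-1\le 0$ and is redundant with $\stateclippingvalue[\state]\ge 0$ from \Cref{eq:stateclipbnds}. Only the instances of \Cref{eq:linClip} with $\belief'=\tilde{\belief}$ remain active, yielding
\[
\stateclippingvalue[\state] \;\ge\; \belief(\state) - (1-\clippingvalue[])\cdot\tilde{\belief}(\state) \qquad \text{for all } \state\in\supp{\belief}.
\]
Summing over $\state\in\supp{\belief}$ and invoking \Cref{eq:sum} gives $\clippingvalue[]\ge 1-(1-\clippingvalue[])\,p$, where $p\colonequals \sum_{\state\in\supp{\belief}}\tilde{\belief}(\state)\le 1$. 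Under the assumption $\clippingvalue[]<1$ this forces $p=1$, hence $\supp{\tilde{\belief}}\subseteq\supp{\belief}$; the summed bound then collapses to an equality which, together with the pointwise inequalities, yields
\[
\stateclippingvalue[\state] \;=\; \belief(\state)-(1-\clippingvalue[])\cdot\tilde{\belief}(\state) \qquad \text{for all } \state\in\supp{\belief}.
\]

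Next, I would translate this identity into a belief clip and back. Setting $\dist(\state)\colonequals\stateclippingvalue[\state]$ gives $\dist(\state)\in[0,\belief(\state)]$ from \Cref{eq:stateclipbnds} and $\sum(\dist)=\clippingvalue[]<1$ from \Cref{eq:sum}, while rearranging the equality shows $(\belief\ominus\dist)(\state)=\tilde{\belief}(\state)$, so $\dist\in\mathcal{C}(\belief,\tilde{\belief})$. Conversely, any $\dist\in\mathcal{C}(\belief,\tilde{\belief})$ yields a feasible MILP assignment $\beliefselectvar{\tilde{\belief}}=1$, $\stateclippingvalue[\state]=\dist(\state)$, $\clippingvalue[]=\sum(\dist)$, with the active instances of \Cref{eq:linClip} holding with equality. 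Hence the minimum of $\clippingvalue[]$ over feasible solutions with $\beliefselectvar{\tilde{\belief}}=1$ and $\clippingvalue[]<1$ equals $\min_{\dist\in\mathcal{C}(\belief,\tilde{\belief})}\sum(\dist)=\clippingvalue[\belief\to\tilde{\belief}]$ when $\tilde{\belief}$ is adequate, and no such solution exists otherwise.

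The theorem then follows by comparing objective values across the finitely many $\tilde{\belief}\in\candidatebeliefset$: an optimal MILP solution sets $\beliefselectvar{\tilde{\belief}}=1$ with $\clippingvalue[]<1$ iff $\tilde{\belief}$ is an adequate clipping candidate in $\candidatebeliefset$ of minimal clipping value. The main obstacle is the algebraic step forcing pointwise equality in \Cref{eq:linClip}: it must extract both $\supp{\tilde{\belief}}\subseteq\supp{\belief}$ and the tightness of the summed inequality from the single assumption $\clippingvalue[]<1$, and it is precisely this tightness that enables the reverse translation from MILP solutions to belief clips.
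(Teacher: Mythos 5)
Your proposal is correct and follows essentially the same route as the paper: restrict attention to the selected candidate, show the active instances of \Cref{eq:linClip} must be tight so that feasible solutions with $\beliefselectvar{\tilde{\belief}}=1$ and $\clippingvalue[]<1$ correspond exactly to belief clips in $\mathcal{C}(\belief,\tilde{\belief})$, and then compare objective values across the finitely many candidates. Your derivation of the pointwise tightness is in fact slightly cleaner than the paper's: you obtain it from feasibility alone (summing the constraints forces $\sum_{\state\in\supp{\belief}}\tilde{\belief}(\state)=1$ and hence equality term by term), whereas the paper argues by contradiction using optimality of the solution and implicitly assumes $\supp{\tilde{\belief}}\subseteq\supp{\belief}$ when it sets $\sum_{\state'\in\supp{\belief}}\tilde{\belief}(\state')=1$.
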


\subsection{Algorithm}
\label{sec:algorithm}
\begin{algorithm}[!t]
	\SetKwFunction{solveClippingMILP}{solveClippingMILP}
	\SetKwFunction{ChooseBelief}{chooseBelief}
	\SetKwFunction{InsertBelief}{insertBelief}
	\SetKwFunction{exploreBelief}{exploreBelief}
	\SetKwFunction{handleClipping}{handleClipping}
	\Input{POMDP $\pomdp = \pomdptuple$ with $\mdp = \mdptuple$, reward structure $\rewards$, goal states $\goalstates \subseteq \states$, under-approx. value function~$\underapprox$, function $\stateinfimum: \states \to \rrinf$ with $\stateinfimum[\state] = \exprew[\state]{\mdp,\rewards}{\min}{\goalstates}$}
	\Output{Clipping belief MDP $\clippingmdp{\pomdp}$ and reward structure $\clippingrewards$}
	
	$\clippingstates \gets \set{\binit, \bcut}$ with $\binit = \set{\sinit \mapsto 1}$ and a new belief state $\bcut$ \label{line:init1}	\;
	$\clippingtransitions(\bcut,\cutaction,\bcut) \gets 1$,
	$\clippingrewards(\bcut,\cutaction,\bcut) \gets 0$ \label{line:bcutselfloop}\tcp*{add self-loop}
	$Q \gets \set{\binit}$\label{line:init2} \tcp*{initialize exploration set} 
	\While{$Q \neq \emptyset$\label{line:loopstart}}{
		$\belief \gets \ChooseBelief(Q)$, 	$Q \gets Q \setminus \set{\belief}$ \label{line:choice1} \tcp*{pop next belief to explore from $Q$} 
			\lIf(\tcp*[f]{add self-loop}\label{line:goal1}){$\supp{\belief} \subseteq \goalstates$}{%
				$\clippingtransitions(\belief,\goalaction,\belief) \gets 1$, 
				$\clippingrewards(\belief,\goalaction,\belief) \gets 0$\label{line:goal2}
			} \uElseIf(\tcp*[f]{expand $\belief$}){\exploreBelief{$\belief$}\label{line:expand}}{%
				\ForEach(\tcp*[f]{Using $\beliefmdp{\pomdp}$ and $\beliefrewards$ as in \Cref{def:beliefMDP,def:beliefRewardStructure}}){$\action \in \actions(\belief)$\label{line:expandstart}}{
 			        \ForEach{$\belief' \in \post[\beliefmdp{\pomdp}]{\belief}{\action}$}{%
 			            $\clippingtransitions(\belief,\action,\belief') \gets \belieftransitions(\belief,\action,\belief')$,
 					    $\clippingrewards(\belief,\action,\belief') \gets \beliefrewards(\belief,\action,\belief')$\;
 					    \lIf{$\belief' \notin \clippingstates$}{\label{line:notInStates1}%
     						$\clippingstates \gets \clippingstates  \cup \set{\belief'}$,
 	    					$Q \gets Q \cup \set{\belief'}$\label{line:addNewBel1}\label{line:expandend}	
 		    			}
 			        }	
 		        }
			} \Else(\tcp*[f]{apply cut-off and clipping to $\belief$}\label{line:cutclipstart}){
				$\clippingtransitions(\belief,\cutaction,\bcut) \gets 1$, 
				$\clippingrewards(\belief,\cutaction,\bcut) \gets \underapprox[\belief]$\label{line:cutoff1} \tcp*{add cut-off transition}
				choose a finite set 	$\candidatebeliefset \subseteq \beliefs_\pomdp$ of clipping candidates for $\belief$ \label{line:chooseCandidates}\\
				$\tilde{\belief},\clippingvalue, \stateclippingvalue \gets \solveClippingMILP(\belief,\candidatebeliefset)$ \label{line:solveClipping}\\	
				\If(\tcp*[f]{Clip $\belief$ using $\tilde{\belief}$} \label{line:clipping1}){$\tilde{\belief} \neq \belief$ and $\tilde{\belief}$ is adequate}{
					$\clippingtransitions(\belief,\clippingaction,\tilde{\belief}) \gets (1{-}\clippingvalue)$,
					$\clippingtransitions(\belief,\clippingaction,\bcut) \gets \clippingvalue$\;
					$\clippingrewards(\belief,\clippingaction,\tilde{\belief}) \gets 0$,
					$\clippingrewards(\belief,\clippingaction,\bcut) \gets\sum_{\state \in \supp{\belief}}  \frac{\stateclippingvalue(\state)}{\clippingvalue} \cdot \stateinfimum[\state]$\label{line:clipping2}\;
			    \lIf{$\tilde{\belief} \notin \clippingstates$}{%
     						$\clippingstates \gets \clippingstates \cup \{\tilde{\belief}\}$,
 	    					$Q \gets Q \cup \{\tilde{\belief}\}$\label{line:loopend}\label{line:cutclipend}\label{line:handlecandidate}	
 		    			}
				}
			}
	}
	\Return{$\clippingmdp{\pomdp} = \tuple{\clippingstates, \actions \uplus \set{\goalaction,\cutaction,\clippingaction}, \clippingtransitions, \binit}$ and $\clippingrewards$}

	\caption{Belief exploration algorithm with cut-offs and clipping}
	\label{alg:clippingMDP}
\end{algorithm}
We incorporate belief cut-offs and belief clipping into an algorithmic framework outlined in \Cref{alg:clippingMDP}.
As input, the algorithm takes an instance of \Cref{prob:pomdp,prob:belmdp}, \ie a POMDP $\pomdp$ with reward structure $\rewards$ and goal states $\goalstates$. In addition, the algorithm considers an under-approximative value function $\underapprox$ (\Cref{sec:belCutoff}) and a function $\stateinfimum$ for the computation of corrective reward values (\Cref{sec:belClipping}).

\Cref{line:init1,line:bcutselfloop} initialise the state set $\clippingstates$ of the under-approximative MDP $\clippingmdp{\pomdp}$ with the initial belief $\binit$ and the designated goal state $\bcut$ which has only one transition to itself with reward $0$.
Furthermore, we initialise the \emph{exploration set} $Q$ by adding $\binit$ (\Cref{line:init2}). 
During the computation, $Q$ is used to keep track of all beliefs we still need to process. We then execute the exploration loop (\Crefrange{line:loopstart}{line:loopend}) until $Q$ becomes empty.
In each exploration step, a belief $\belief$ is selected\footnote{For example, $Q$ can be implemented as a FIFO queue.} and removed from $Q$.
There are three cases for the currently processed belief $\belief$.

If $\supp{\belief} \subseteq \goalstates$, \ie $\belief$ is a goal belief, we add a self-loop with reward $0$ to $\belief$ and continue with the next belief (\Cref{line:goal1}).
$\belief$ is not expanded as successors of goal beliefs will not influence the result of the computation.

If $\belief$ is not a goal belief, we use a heuristic function\footnote{The decision can be made for example by considering the size of the already explored state space such that the expansion is stopped if a size threshold has been reached. More involved decision heuristics are subject to further research.} \exploreBelief to decide if $\belief$ is expanded in \Cref{line:expand}.
\Crefrange{line:expandstart}{line:expandend} outline the expansion step. 
The transitions from $\belief$ to its successor beliefs and the corresponding rewards as in the original belief MDP (see \Cref{sec:belMDP}) are added.
Furthermore, the successor beliefs that have not been encountered before are added to the set of states $\clippingstates$ and the exploration set $Q$.

If $\belief$ is \emph{not} expanded, we apply the cut-off approach \emph{and} the clipping approach to $\belief$ in \Crefrange{line:cutclipstart}{line:cutclipend}.
In \Cref{line:cutoff1} we add a cut-off transition from $\belief$ to $\bcut$ with a new action $\cutaction$.
We use the given under-approximative value function $\underapprox$ to compute the cut-off reward.
Towards the clipping approach, a set of candidate beliefs is chosen and the belief clipping MILP for $\belief$ and the candidate set is constructed as described in \Cref{def:clippingvalueMILP} (\Cref{line:chooseCandidates,line:solveClipping}).
If an adequate candidate $\tilde{\belief}$ with clipping values $\clippingvalue$ and $\stateclippingvalue(\state)$ for $\state \in \supp{\belief}$ has been found, we add the transitions from $\belief$  to $\bcut$ and to $\tilde{\belief}$ using a new action $\clippingaction$ and probabilities $\clippingvalue$ and $1-\clippingvalue$, respectively. 
Furthermore, we equip the transitions with reward values as described in \Cref{sec:belClipping} using the given function $\stateinfimum$ (\Crefrange{line:clipping1}{line:clipping2}).
If the clipping candidate $\tilde{\belief}$ has not been encountered before, we add it to the state space of the MDP and to the exploration set in \Cref{line:handlecandidate}.

The result of the algorithm is an MDP $\clippingmdp{\pomdp}$ with reward structure $\clippingrewards$.
The set of states $\clippingstates$ of $\clippingmdp{\pomdp}$ contains all encountered beliefs.
To guarantee termination of the algorithm, the decision heuristic \exploreBelief has to stop exploring further beliefs at some point.
Moreover, the handling of clipping candidates in \Cref{line:handlecandidate} should not add new beliefs to $Q$ infinitely often.
We therefore fix a finite set of candidate beliefs $\beliefgrid{} \subseteq \beliefs_\pomdp$ and make sure that the candidate sets $\candidatebeliefset$ in \Cref{line:chooseCandidates} satisfy $(\candidatebeliefset \setminus \clippingstates) \subseteq \beliefgrid{}$.
To ensure a certain progress in the exploration ``$\clippingaction$-cycles''---\ie paths of the form $\belief_1\, \clippingaction\, \dots\, \clippingaction \,\belief_n\, \clippingaction\, \belief_1$---are avoided in $\clippingmdp{\pomdp}$. This can be done, \eg by always expanding the candidate beliefs $\belief \in \beliefgrid{}$.

Expected total rewards until reaching the extended set of goal beliefs $\cutgoals \colonequals \goalbels \cup \{\bcut \}$ in $\clippingmdp{\pomdp}$ under-approximate the values in the belief MDP:
\begin{restatable}{theorem}{clippingUnderapprox}
\label{thm:clippingUnderapprox}
For all beliefs $\belief \in \clippingstates\setminus \set{\bcut}$ it holds that
$$\exprew[\belief]{\clippingmdp{\pomdp},\clippingrewards}{\max}{\cutgoals} \leq \optvaluefunc(\belief) = \exprew[\belief]{\beliefmdp{\pomdp},\beliefrewards}{\max}{\goalbels}.$$
\end{restatable}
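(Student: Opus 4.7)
The plan is to treat $\optvaluefunc$ itself as a candidate value function on $\clippingmdp{\pomdp}$ and show it is a Bellman super-solution there. Formally, extend $\optvaluefunc$ to all of $\clippingstates$ by setting $\optvaluefunc(\bcut) \colonequals 0$. The core of the proof is to establish, for every $\belief \in \clippingstates$ and every action $\action$ enabled at $\belief$ in $\clippingmdp{\pomdp}$, the one-step inequality
\[
\sum_{\belief' \in \clippingstates} \clippingtransitions(\belief,\action,\belief') \cdot \bigl(\clippingrewards(\belief,\action,\belief') + \optvaluefunc(\belief')\bigr) ~\le~ \optvaluefunc(\belief).
\]
Once this is shown, a standard fixed-point argument for maximal expected total rewards in finite MDPs with uniformly-signed rewards yields $\exprew[\belief]{\clippingmdp{\pomdp},\clippingrewards}{\max}{\cutgoals} \le \optvaluefunc(\belief)$ for all $\belief \in \clippingstates \setminus \set{\bcut}$, which is exactly the claim (combined with the equality $\optvaluefunc(\belief) = \exprew[\belief]{\beliefmdp{\pomdp},\beliefrewards}{\max}{\goalbels}$ recalled after \Cref{def:valueFunction}).

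\paragraph{Case analysis on how $\belief$ was processed by \Cref{alg:clippingMDP}}
I would verify the one-step inequality by distinguishing the three situations that can arise in the exploration loop. If $\belief = \bcut$ or $\supp{\belief}\subseteq\goalstates$, the only enabled action is the self-loop with reward $0$, so the left-hand side equals $\optvaluefunc(\belief) = 0$. If $\belief$ has been expanded in \Crefrange{line:expandstart}{line:expandend}, the outgoing transitions and rewards in $\clippingmdp{\pomdp}$ coincide with those of $\beliefmdp{\pomdp}$ under the same action, so the left-hand side is exactly one application of the belief-MDP Bellman operator to $\optvaluefunc$, giving $\optvaluefunc(\belief)$ by \Cref{def:valueFunction} (and the equation immediately following it). If $\belief$ has been cut off and/or clipped, the enabled actions are $\cutaction$ and possibly $\clippingaction$: for $\cutaction$ the left-hand side evaluates to $\underapprox(\belief)$, which is $\le \optvaluefunc(\belief)$ by definition of an under-approximative value function; for $\clippingaction$ the reward and transition structure is precisely arranged so that the left-hand side equals
\[
(1-\clippingvalue)\cdot\optvaluefunc(\tilde{\belief}) + \sum_{\state\in\supp{\belief}} \stateclippingvalue(\state)\cdot\stateinfimum[\state],
\]
and \Cref{lem:clip} (applied to $\belief$ and the adequate clipping candidate $\tilde{\belief}$ selected by the MILP) yields exactly the bound by $\optvaluefunc(\belief)$.

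\paragraph{Main obstacle}
The delicate step is the passage from the one-step super-solution property to the bound on the limit value $\exprew[\belief]{\clippingmdp{\pomdp},\clippingrewards}{\max}{\cutgoals}$. Because $\clippingmdp{\pomdp}$ may contain genuine cycles in its expanded fragment, this is not immediate from a single Bellman-operator inequality; rather, it relies on the standard least-fixed-point characterisation of maximal expected total reward for uniformly-signed reward structures (so that values are well-defined in $\rrinf$ and the Bellman operator is monotone on the lattice of value functions bounded by $\optvaluefunc$). Some additional care is needed when $\rewards$ is negative and $\stateinfimum[\state] = -\infty$ for some $\state$: in that case the clipping transition reward is $-\infty$, the left-hand side of the one-step inequality is $-\infty$, and the inequality holds trivially, so the argument goes through unchanged. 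Modulo these technicalities, the proof reduces to the case analysis above plus a direct appeal to \Cref{lem:clip} and to the definition of $\underapprox$.
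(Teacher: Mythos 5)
Your one-step case analysis is sound and matches the ingredients the paper uses: for expanded beliefs the backup coincides with the belief-MDP Bellman operator applied to $\optvaluefunc$ (a fixed point), for $\cutaction$ it reduces to $\underapprox[\belief] \le \optvaluefunc(\belief)$, and for $\clippingaction$ it reduces exactly to \Cref{lem:clip}. However, the final step --- passing from ``$\optvaluefunc$ is a Bellman super-solution on $\clippingmdp{\pomdp}$'' to $\exprew[\belief]{\clippingmdp{\pomdp},\clippingrewards}{\max}{\cutgoals} \le \optvaluefunc(\belief)$ via ``the standard least-fixed-point characterisation'' --- has a genuine gap. That characterisation is valid only for \emph{non-negative} reward structures, where $\exprew{}{\max}{}$ is the least fixed point above $0$ and monotonicity gives $T^n(0) \le T^n(W) \le W$ for any super-solution $W \ge 0$. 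The paper explicitly allows non-positive reward structures (\Cref{def:rewardStructure}; all the $R_{\min}$ benchmarks are run through this theorem after negation), and for negative models a Bellman super-solution does \emph{not} in general upper-bound the maximal expected total reward: consider a single non-goal state with a zero-reward self-loop and a second action leading to the goal with reward $-1$; then $W = -1$ satisfies $\max(W, -1) \le W$, yet the optimal value is $0$ (loop forever, collecting total reward $0$). Your inductive chain $T^n(0) \le W$ breaks at the base case since $0 \ge W$ points the wrong way, and $\lim_n T^n_\sched(W)$ need not equal the expected reward under $\sched$ when goal-reachability is not almost sure. So for negative rewards your argument proves nothing, and it cannot be repaired by a purely local fixed-point appeal.

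The paper circumvents this by a genuinely different route: it fixes a memoryless deterministic policy $\sched$ on $\clippingmdp{\pomdp}$, defines an $n$-step value $\mathcal{V}^\sched_n$ on the clipping MDP (with $\mathcal{V}^\sched_n(\belief) = \min(\underapprox[\belief], \valuefunc[n](\belief))$ at cut-off beliefs and an unrolling along $\clippingaction$-chains, well-founded by the no-$\clippingaction$-cycle assumption), proves $\mathcal{V}^\sched_n \le \valuefunc[n]$ by induction on $n$ using an $n$-step variant of \Cref{lem:clip}, and only then takes limits on both sides. Comparing the two truncations at every finite horizon is what makes the argument sign-agnostic; the $\min$ with $\valuefunc[n]$ at cut-off beliefs is precisely the device needed to make the $n$-step comparison go through while still converging to $\underapprox[\belief]$. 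If you want to keep your super-solution viewpoint, you would need to either restrict to positive reward structures or replace the least-fixed-point appeal with such an $n$-step (or policy-wise, path-wise) comparison; as written, the proof is incomplete for half of the cases the theorem covers.
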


\begin{restatable}{corollary}{corollaryClippingMDP}
\label{thm:clippingMDP}
$\exprew{\clippingmdp{\pomdp},\clippingrewards}{\max}{\cutgoals} \leq \exprew{\pomdp,\rewards}{\max}{\goalstates}$.
\end{restatable}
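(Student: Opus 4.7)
The corollary is an immediate consequence of \Cref{thm:clippingUnderapprox} instantiated at the initial belief, so the proposal is essentially a three-line specialisation argument.

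First, I would observe that by the initialisation in \Cref{line:init1} of \Cref{alg:clippingMDP}, the initial belief $\binit = \set{\sinit \mapsto 1}$ is an element of $\clippingstates \setminus \set{\bcut}$, and it is also the initial state of the constructed MDP $\clippingmdp{\pomdp}$. Hence \Cref{thm:clippingUnderapprox} applies at $\belief = \binit$, giving
\[
\exprew[\binit]{\clippingmdp{\pomdp},\clippingrewards}{\max}{\cutgoals} ~\leq~ \optvaluefunc(\binit) ~=~ \exprew[\binit]{\beliefmdp{\pomdp},\beliefrewards}{\max}{\goalbels}.
\]

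Next, I would unfold the convention from \Cref{def:maxExpRew} that $\exprew{\cdot,\cdot}{\max}{\cdot}$ without a subscript refers to the value at the initial state. This rewrites the left-hand side as $\exprew{\clippingmdp{\pomdp},\clippingrewards}{\max}{\cutgoals}$. For the right-hand side, I would invoke the identity stated immediately after \Cref{def:valueFunction}, namely
\[
\exprew[\sinit]{\pomdp,\rewards}{\max}{\goalstates} ~=~ \exprew[\binit]{\beliefmdp{\pomdp},\beliefrewards}{\max}{\goalbels} ~=~ \optvaluefunc(\binit),
\]
which equates the belief-MDP value at $\binit$ with the POMDP value $\exprew{\pomdp,\rewards}{\max}{\goalstates}$. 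Chaining the inequality and this identity yields the claim.

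There is essentially no obstacle here: all the work has already been done in \Cref{thm:clippingUnderapprox} and in the correspondence between $\pomdp$ and $\beliefmdp{\pomdp}$. The only minor point to check is that $\binit$ is indeed added to $\clippingstates$ and serves as the initial state of $\clippingmdp{\pomdp}$, which is evident from \Cref{line:init1} and the return statement of \Cref{alg:clippingMDP}. Hence the proof reduces to a one-paragraph specialisation of the preceding theorem.
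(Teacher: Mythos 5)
Your proposal is correct and follows essentially the same route as the paper: instantiate \Cref{thm:clippingUnderapprox} at $\binit$, unfold the initial-state convention for $\mathsf{ER}$, and identify the belief-MDP value with the POMDP value via the identity following \Cref{def:valueFunction}. You are slightly more explicit than the paper in spelling out that last identification and in checking that $\binit \in \clippingstates \setminus \set{\bcut}$, but the argument is the same.
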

\section{Experimental Evaluation}
\begin{table}[t]
\caption{Results for benchmark POMDPs with maximisation objective}
\centering
\scriptsize
\begin{tabular}{|cc||r||r||r|r|r|r|r||r|}
\hline
\multicolumn{2}{|c||}{Benchmark} & \multicolumn{1}{c||}{Data} & \multicolumn{1}{c||}{\tool{Prism}} & \multicolumn{6}{c|}{\tool{Storm}}\\
\multicolumn{1}{|c}{Model} & \multicolumn{1}{c||}{$\phi$} & \multicolumn{1}{c||}{$\states$/$\actions$/$\observations$} & \multicolumn{1}{c||}{} & \multicolumn{1}{c|}{Cut-Off} & \multicolumn{4}{c||}{Cut-Off + Clipping} & \multicolumn{1}{c|}{Over-}\\
\multicolumn{1}{|c}{} & \multicolumn{1}{c||}{} & \multicolumn{1}{c||}{} & \multicolumn{1}{c||}{} &  \multicolumn{1}{c|}{Only} & \multicolumn{1}{c|}{$\resolution$=2} & \multicolumn{1}{c|}{$\resolution$=3} & \multicolumn{1}{c|}{$\resolution$=4} & \multicolumn{1}{c||}{$\resolution$=6} & \multicolumn{1}{c|}{Approx.}\\
\hline\hline
\model{Drone} & \multirow{3}{*}{$P_\mathrm{max}$} & $1226$ & TO / MO & ${\ge}\, 0.79$ & ${\ge}\, 0.79$ & \multirow{3}{*}{TO} & \multirow{3}{*}{TO} & \multirow{3}{*}{TO} & ${\le}\, 0.94$\\
\multirow{2}{*}{4-1} &  & $2954$ &  & \ensuremath{${<}\,$ 1\text{s}} & \ensuremath{1360\text{s}} &  &  &  & \\
 &  & $384$ &  & $3{\cdot} 10^{4}$ & $3{\cdot} 10^{4}$ &  &  &  & \\
\hline
\model{Drone} & \multirow{3}{*}{$P_\mathrm{max}$} & $1226$ & TO / MO & ${\ge}\, 0.86$ & ${\ge}\, 0.91$ & ${\ge}\, 0.92$ & \multirow{3}{*}{TO} & \multirow{3}{*}{TO} & ${\le}\, 0.97$\\
\multirow{2}{*}{4-2} &  & $2954$ &  & \ensuremath{${<}\,$ 1\text{s}} & \ensuremath{249\text{s}} & \ensuremath{1902\text{s}} &  &  & \\
 &  & $761$ &  & $2{\cdot} 10^{4}$ & $2{\cdot} 10^{4}$ & $2{\cdot} 10^{4}$ &  &  & \\
\hline\hline
\model{Grid-av} & \multirow{3}{*}{$P_\mathrm{max}$} & $17$ & $[\mathbf{0.21}, 1.0]$ & ${\ge}\, 0.86$ & ${\ge}\, 0.93$ & ${\ge}\, 0.93$ & ${\ge}\, 0.93$ & ${\ge}\, 0.93$ & ${\le}\, 0.98$\\
\multirow{2}{*}{4-0} &  & $59$ & \ensuremath{5.14\text{s}} & \ensuremath{${<}\,$ 1\text{s}} & \ensuremath{${<}\,$ 1\text{s}} & \ensuremath{1.77\text{s}} & \ensuremath{3.63\text{s}} & \ensuremath{13.9\text{s}} & \\
 &  & $4$ & $\eta=$6 & $238$ & $312$ & $472$ & $663$ & $1300$ & \\
\hline
\model{Grid-av} & \multirow{3}{*}{$P_\mathrm{max}$} & $17$ & $[\mathbf{0.21}, 1.0]$ & ${\ge}\, 0.82$ & ${\ge}\, 0.85$ & ${\ge}\, 0.82$ & ${\ge}\, 0.85$ & \multirow{3}{*}{TO} & ${\le}\, 0.99$\\
\multirow{2}{*}{4-0.1} &  & $59$ & \ensuremath{1.47\text{s}} & \ensuremath{${<}\,$ 1\text{s}} & \ensuremath{26.1\text{s}} & \ensuremath{198\text{s}} & \ensuremath{1913\text{s}} &  & \\
 &  & $4$ & $\eta=$3 & $238$ & $317$ & $461$ & $759$ &  & \\
\hline\hline
\model{Netw-p} & \multirow{3}{*}{$R_\mathrm{max}$} & $2{\cdot} 10^{4}$ & $[\mathbf{557}, 557]$ & ${\ge}\, 537$ & ${\ge}\, 537$ & ${\ge}\, 537$ & ${\ge}\, 537$ & ${\ge}\, 537$ & ${\le}\, 558$\\
\multirow{2}{*}{2-8-20} &  & $3{\cdot} 10^{4}$ & \ensuremath{2355\text{s}} & \ensuremath{2.3\text{s}} & \ensuremath{98.5\text{s}} & \ensuremath{320\text{s}} & \ensuremath{651\text{s}} & \ensuremath{2368\text{s}} & \\
 &  & $4909$ & $\eta=$10 & $8{\cdot} 10^{4}$ & $1{\cdot} 10^{5}$ & $1{\cdot} 10^{5}$ & $1{\cdot} 10^{5}$ & $1{\cdot} 10^{5}$ & \\
\hline
\model{Netw-p} & \multirow{3}{*}{$R_\mathrm{max}$} & $2{\cdot} 10^{5}$ & TO / MO & ${\ge}\, 769$ & ${\ge}\, 769$ & \multirow{3}{*}{TO} & \multirow{3}{*}{TO} & \multirow{3}{*}{TO} & ${\le}\, 819$\\
\multirow{2}{*}{3-8-20} &  & $3{\cdot} 10^{5}$ &  & \ensuremath{290\text{s}} & \ensuremath{6640\text{s}} &  &  &  & \\
 &  & $2{\cdot} 10^{4}$ &  & $1{\cdot} 10^{6}$ & $1{\cdot} 10^{6}$ &  &  &  & \\
\hline\hline
\model{Refuel} & \multirow{3}{*}{$P_\mathrm{max}$} & $208$ & $[\mathbf{0.67}, 0.72]$ & ${\ge}\, 0.67$ & ${\ge}\, 0.67$ & ${\ge}\, 0.67$ & ${\ge}\, 0.67$ & ${\ge}\, 0.67$ & ${\le}\, 0.69$\\
\multirow{2}{*}{06} &  & $565$ & \ensuremath{4625\text{s}} & \ensuremath{${<}\,$ 1\text{s}} & \ensuremath{5.89\text{s}} & \ensuremath{24.3\text{s}} & \ensuremath{92\text{s}} & \ensuremath{2076\text{s}} & \\
 &  & $50$ & $\eta=$3 & $4576$ & $4834$ & $5204$ & $5603$ & $6135$ & \\
\hline
\model{Refuel} & \multirow{3}{*}{$P_\mathrm{max}$} & $470$ & TO / MO & ${\ge}\, 0.45$ & ${\ge}\, 0.45$ & \multirow{3}{*}{TO} & \multirow{3}{*}{TO} & \multirow{3}{*}{TO} & ${\le}\, 0.51$\\
\multirow{2}{*}{08} &  & $1431$ &  & \ensuremath{${<}\,$ 1\text{s}} & \ensuremath{839\text{s}} &  &  &  & \\
 &  & $66$ &  & $2{\cdot} 10^{4}$ & $2{\cdot} 10^{4}$ &  &  &  & \\
\hline
\end{tabular}
\label{tab:max}
\end{table}
\begin{table}[t]
\caption{Results for benchmark POMDPs with minimisation objective}
\centering
\scriptsize
\begin{tabular}{|cc||r||r||r|r|r|r|r||r|}
\hline
\multicolumn{2}{|c||}{Benchmark} & \multicolumn{1}{c||}{Data} & \multicolumn{1}{c||}{\tool{Prism}} & \multicolumn{6}{c|}{\tool{Storm}}\\
\multicolumn{1}{|c}{Model} & \multicolumn{1}{c||}{$\phi$} & \multicolumn{1}{c||}{$\states$/$\actions$/$\observations$} & \multicolumn{1}{c||}{} & \multicolumn{1}{c|}{Cut-Off} & \multicolumn{4}{c||}{Cut-Off + Clipping} & \multicolumn{1}{c|}{Over-}\\
\multicolumn{1}{|c}{} & \multicolumn{1}{c||}{} & \multicolumn{1}{c||}{} & \multicolumn{1}{c||}{} &  \multicolumn{1}{c|}{Only} & \multicolumn{1}{c|}{$\resolution$=2} & \multicolumn{1}{c|}{$\resolution$=3} & \multicolumn{1}{c|}{$\resolution$=4} & \multicolumn{1}{c||}{$\resolution$=6} & \multicolumn{1}{c|}{Approx.}\\
\hline\hline
\model{Grid} & \multirow{3}{*}{$R_\mathrm{min}$} & $17$ & $[4.52, \mathbf{4.7}]$ & ${\le}\, 4.78$ & ${\le}\, 4.78$ & ${\le}\, 4.78$ & ${\le}\, 4.78$ & \multirow{3}{*}{TO} & ${\ge}\, 4.52$\\
\multirow{2}{*}{4-0.1} &  & $62$ & \ensuremath{649\text{s}} & \ensuremath{${<}\,$ 1\text{s}} & \ensuremath{15.6\text{s}} & \ensuremath{148\text{s}} & \ensuremath{1940\text{s}} &  & \\
 &  & $3$ & $\eta=$10 & $258$ & $255$ & $255$ & $255$ &  & \\
\hline
\model{Grid} & \multirow{3}{*}{$R_\mathrm{min}$} & $17$ & $[6.12, \mathbf{6.31}]$ & ${\le}\, 6.56$ & ${\le}\, 6.56$ & ${\le}\, 6.56$ & ${\le}\, 6.56$ & \multirow{3}{*}{TO} & ${\ge}\, 6.08$\\
\multirow{2}{*}{4-0.3} &  & $62$ & \ensuremath{1077\text{s}} & \ensuremath{${<}\,$ 1\text{s}} & \ensuremath{15.8\text{s}} & \ensuremath{148\text{s}} & \ensuremath{1983\text{s}} &  & \\
 &  & $3$ & $\eta=$10 & $255$ & $256$ & $256$ & $256$ &  & \\
\hline\hline
\model{Maze2} & \multirow{3}{*}{$R_\mathrm{min}$} & $15$ & $[6.32, \mathbf{6.32}]$ & ${\le}\, 6.34$ & ${\le}\, 6.34$ & ${\le}\, 6.34$ & ${\le}\, 6.34$ & ${\le}\, 6.34$ & ${\ge}\, 6.32$\\
\multirow{2}{*}{0.1} &  & $54$ & \ensuremath{1.79\text{s}} & \ensuremath{${<}\,$ 1\text{s}} & \ensuremath{${<}\,$ 1\text{s}} & \ensuremath{${<}\,$ 1\text{s}} & \ensuremath{${<}\,$ 1\text{s}} & \ensuremath{2.02\text{s}} & \\
 &  & $8$ & $\eta=$10 & $91$ & $90$ & $90$ & $90$ & $90$ & \\
\hline\hline
\model{Netw} & \multirow{3}{*}{$R_\mathrm{min}$} & $4589$ & $[3.17, \mathbf{3.2}]$ & ${\le}\, 6.56$ & ${\le}\, 6.56$ & ${\le}\, 6.56$ & ${\le}\, 6.56$ & ${\le}\, 6.56$ & ${\ge}\, 3.14$\\
\multirow{2}{*}{2-8-20} &  & $6973$ & \ensuremath{211\text{s}} & \ensuremath{${<}\,$ 1\text{s}} & \ensuremath{5.31\text{s}} & \ensuremath{17.2\text{s}} & \ensuremath{42.3\text{s}} & \ensuremath{167\text{s}} & \\
 &  & $1173$ & $\eta=$10 & $2{\cdot} 10^{4}$ & $2{\cdot} 10^{4}$ & $2{\cdot} 10^{4}$ & $3{\cdot} 10^{4}$ & $3{\cdot} 10^{4}$ & \\
\hline
\model{Netw} & \multirow{3}{*}{$R_\mathrm{min}$} & $2{\cdot} 10^{4}$ & $[5.61, \mathbf{6.79}]$ & ${\le}\, 11.9$ & ${\le}\, 11.9$ & ${\le}\, 11.9$ & ${\le}\, 11.9$ & \multirow{3}{*}{TO} & ${\ge}\, 6.13$\\
\multirow{2}{*}{3-8-20} &  & $3{\cdot} 10^{4}$ & \ensuremath{7133\text{s}} & \ensuremath{3.51\text{s}} & \ensuremath{214\text{s}} & \ensuremath{1372\text{s}} & \ensuremath{4910\text{s}} &  & \\
 &  & $2205$ & $\eta=$6 & $1{\cdot} 10^{5}$ & $2{\cdot} 10^{5}$ & $2{\cdot} 10^{5}$ & $2{\cdot} 10^{5}$ &  & \\
\hline\hline
\model{Rocks} & \multirow{3}{*}{$R_\mathrm{min}$} & $6553$ &  & ${\le}\, 38$ & ${\le}\, 38$ & ${\le}\, 38$ & ${\le}\, 20$ & ${\le}\, 21$ & ${\ge}\, 20$\\
\multirow{2}{*}{12} &  & $3{\cdot} 10^{4}$ & TO / MO & \ensuremath{1.39\text{s}} & \ensuremath{61.1\text{s}} & \ensuremath{138\text{s}} & \ensuremath{230\text{s}} & \ensuremath{532\text{s}} & \\
 &  & $1645$ &  & $3{\cdot} 10^{4}$ & $3{\cdot} 10^{4}$ & $3{\cdot} 10^{4}$ & $5{\cdot} 10^{4}$ & $6{\cdot} 10^{4}$ & \\
\hline
\model{Rocks} & \multirow{3}{*}{$R_\mathrm{min}$} & $1{\cdot} 10^{4}$ & & ${\le}\, 44$ & ${\le}\, 44$ & ${\le}\, 44$ & ${\le}\, 26$ & ${\le}\, 27$ & ${\ge}\, 26$\\
\multirow{2}{*}{16} &  & $5{\cdot} 10^{4}$ & TO / MO & \ensuremath{3.85\text{s}} & \ensuremath{114\text{s}} & \ensuremath{230\text{s}} & \ensuremath{399\text{s}} & \ensuremath{1062\text{s}} & \\
 &  & $2761$ &  & $4{\cdot} 10^{4}$ & $4{\cdot} 10^{4}$ & $4{\cdot} 10^{4}$ & $6{\cdot} 10^{4}$ & $1{\cdot} 10^{5}$ & \\
\hline
\end{tabular}
\label{tab:min}
\end{table}
\paragraph{Implementation details}
We integrated \Cref{alg:clippingMDP} in the probabilistic model checker \tool{Storm}\cite{storm} as an extension of the POMDP verification framework described in \cite{bork2020}.
Inputs are a POMDP---encoded either explicitly or using an extension of the \tool{Prism} language~\cite{norman2017}---and a property specification.
Internally, POMDPs and MDPs are represented using sparse matrices.
The implementation supports minimisation\footnote{For minimisation, the under-approximation yields \emph{upper bounds}.} and maximisation of reachability probabilities, reach-avoid probabilities (\ie the probability to avoid a set of bad state until a set of goal states is reached), and expected total rewards.
In a preprocessing step, functions $\underapprox$ and $\stateinfimum$ as considered in \Cref{alg:clippingMDP} are generated.
For $\underapprox$, we consider the function $\precompunderapprox^\sched$ as in \Cref{lem:precompValueFunc}, where $\sched$ is a memoryless observation-based policy given by a heuristic%
\footnote{The heuristic uses optimal values obtained on the fully observable underlying MDP.}.
For the function $\stateinfimum$, we apply standard MDP analysis on the underlying MDP.
When exploring the abstraction MDP $\clippingmdp{\pomdp}$,
our heuristic expands a belief iff $|\clippingstates| \le  | \states | \cdot \max_{\observation \in \observation} |\obsfunction^{-1}(\observation)|$, where $|\clippingstates|$ is the number of already explored beliefs and $|\obsfunction^{-1}(\observation)|$ is the number of POMDP states with observation $\observation$.
Belief clipping can either be disabled entirely, or we consider candidate sets $\candidatebeliefset \subseteq \beliefgrid{\resolution}$, where
$\beliefgrid{\resolution} \colonequals \set{\belief \in \beliefs \mid \forall \state \in \states : \belief(\state) \in \set{\nicefrac{i}{\resolution} \mid i \in \nn, 0 \leq i \leq \resolution}}$ forms a finite, regular \emph{grid} of beliefs with resolution $\resolution \in \nn \setminus \set{0}$. Grid beliefs $\belief \in \beliefgrid{\resolution}$ are always expanded.
Furthermore, we exclude clipping candidates $\tilde{\belief}$ with $\stateclippingvalue(\state) > 0$ for $\state$ with $\stateinfimum(\state) = -\infty$; clipping with such candidates is not useful as it induces a value of $-\infty$. 
Expected total rewards on fully observable MDPs are computed using \emph{Sound Value Iteration}~\cite{quatmann2018} with relative precision $10^{-6}$.
MILPs are solved using~\tool{Gurobi}~\cite{gurobi}.

\paragraph{Set-up}
We evaluate our under-approximation approach with cut-offs only and with enabled belief clipping procedure using grid resolutions $\resolution = 2,3,4,6$.
We consider the same POMDP benchmarks%
\footnote{Instances with a finite belief MDP that would be fully explored by our algorithm are omitted since the exact value can be obtained without approximation techniques.}
as in \cite{norman2017,bork2020}.
The POMDPs are scalable versions of case studies stemming from various application domains.
%
To establish an external baseline, we compare with the approach of~\cite{norman2017} implemented in \tool{Prism}~\cite{prism}.
\tool{Prism} generates an under-approximation based on an optimal policy for an over-approximative MDP which---in contrast to \tool{Storm}---means that always both, under- and over-approximations, have to be computed.
We ran \tool{Prism} with resolutions $\resolution = 2, 3, 4, 6, 8, 10$ and report on the \emph{best} approximation obtained.
To provide a further reference for the tightness of our under-approximation, we compute over-approximative bounds as in~\cite{bork2020} using the implementation in \tool{Storm} with a resolution of $\resolution = 8$.
All experiments were run on an Intel\textsuperscript{\textregistered} Xeon\textsuperscript{\textregistered} Platinum 8160 CPU using 4 threads%
\footnote{For our implementation, only \tool{Gurobi} runs multi-threaded. \tool{Prism} uses multiple threads for garbage collection.}%
, 64GB RAM and a time limit of 2~hours.

\begin{figure}[t]
\begin{tikzpicture}
			\begin{axis}[
			    width=\textwidth,
			    height=4.1cm,
 				xlabel={Number of explored beliefs $|\clippingstates|$},
 				xmin=0,
 				xmax = 31000,
 			 	xtick={0,10000,20000,30000,40000,50000},
 				xticklabel style={/pgf/number format/fixed, /pgf/number format/precision=5},
 				scaled x ticks=false,
 				ylabel={$\pr{}{}{\goalstates}$},
 				ymin=0.48,
 				ymax=1.02,
 				ytick={0.5,0.75,1},
 				axis on top,
 				legend style={at={(0.95,0.15)}, anchor=east, legend columns=-1}
 				]
				
 		      \addplot[mark=x, mark size=2pt, RWTHblue, thick, only marks ] table [x=x, y=y, col sep=comma] {drone_cutoff.csv};
 		      \addplot[mark=*, mark size=1.35pt, RWTHmagenta, thick, only marks ] table [x=x, y=y, col sep=comma] {drone_grid2.csv};
            \legend{Cut-Off,$\resolution = 2$}
 			  	\addplot[densely dashed] coordinates {(0,0.9745837231) (50000,0.9745837231)};
 			\end{axis} 
\end{tikzpicture}
\caption{Accuracy for \model{Drone} 4-2 with different sizes of approximation MDP $\clippingmdp{\pomdp}$}
\label{fig:drone}
\end{figure}

\paragraph{Results}
\Cref{tab:max,tab:min} show our results for maximising and minimising properties, respectively.
The first columns contain for each POMDP the benchmark name, model parameters, property type (probabilities (P) or rewards (R)), and the numbers of states, state-action pairs, and observations.
Column \tool{Prism} gives the result with the smallest gap between over- and under-approximation computed with the approach of~\cite{norman2017}. 
For maximising (minimising) properties, our approach competes with the lower (upper) bound of the provided interval. The relevant value is marked in bold. We also provide the computation time and the considered resolution $\resolution$.
For our implementation, we give results for the configuration with disabled clipping and for clipping with different resolutions $\resolution$.
In each cell, we give the obtained 
value, the computation time and the number of states in the abstraction MDP $\clippingmdp{\pomdp}$. Time- 
and memory-outs are indicated by TO and MO.
The right-most column indicates the over-approximation value computed via~\cite{bork2020}.

\paragraph{Discussion}
The pure cut-off approach yields valid under-approximations in \emph{all} benchmark instances---often exceeding the accuracy of the approach of~\cite{norman2017} while being consistently faster.
In some cases, the resulting values improve when clipping is enabled. 
However, larger candidate sets significantly increase the computation time which stems from the fact that many clipping MILPs have to be solved.

For \model{Drone} 4-2, \Cref{fig:drone} plots the resulting under-approximation values ($y$-axis) for varying sizes of the explored MDP $\clippingmdp{\pomdp}$ ($x$-axis).
The horizontal, dashed line indicates the computed over-approximation value.
The quality of the approximation further improves with an increased number of explored beliefs. 

\section{Conclusion}
We presented techniques to safely under-approximate expected total rewards in POMDPs.
The approach scales to large POMDPs and often produces tight lower bounds.
Belief clipping generally does not improve on the simpler cut-off approach in terms of results and performance. However, considering---and optimising---the approach for particular classes of POMDPs might prove beneficial.
Future work includes integrating the algorithm into a refinement loop that also considers over-approximation techniques from~\cite{bork2020}.
Furthermore, lifting our approach to partially observable stochastic games is promising.

\paragraph{Data Availability} The artifact~\cite{artifact} accompanying this paper contains source code, benchmark files, and replication scripts for our experiments.

\bibliography{main}
\bibliographystyle{splncs04}

\vfill

{\small\medskip\noindent{\bf Open Access} This chapter is licensed under the terms of the Creative Commons\break Attribution 4.0 International License (\url{http://creativecommons.org/licenses/by/4.0/}), which permits use, sharing, adaptation, distribution and reproduction in any medium or format, as long as you give appropriate credit to the original author(s) and the source, provide a link to the Creative Commons license and indicate if changes were made.}

{\small \spaceskip .28em plus .1em minus .1em The images or other third party material in this chapter are included in the chapter's Creative Commons license, unless indicated otherwise in a credit line to the material.~If material is not included in the chapter's Creative Commons license and your intended\break use is not permitted by statutory regulation or exceeds the permitted use, you will need to obtain permission directly from the copyright holder.}

\medskip\noindent\includegraphics{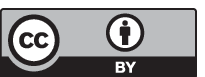}

 \clearpage
 \appendix
 \section{Dealing with unobservable goal states}\label{app:goal}
We argue that it can be assumed---without loss of generality---that goal states can be observed when computing expected total rewards.
More precisely, given an instance for \Cref{prob:pomdp}---\ie a POMDP $\pomdp = \pomdptuple$ with underlying MDP $\mdp = \mdptuple$, a reward structure $\rewards$, and a set of goal states $\goalstates \subseteq \states$---we may impose the following assumption:
\begin{assumption}\label{as:goalobs}
There is $\observations' \subseteq \observations$ such that $\state \in \goalstates$ iff $\obsof{\state} \in \observations'$.
\end{assumption}
To show that this assumption is indeed without loss of generality, suppose that it does \emph{not} hold for the above POMDP $\pomdp$.
Consider the POMDP $\pomdp' = \tuple{\mdp',\observations \uplus \set{\mathfrak{g}}, \obsfunction'}$, with underlying MDP $\mdp' = \tuple{\states \uplus \tilde{\goalstates}, \actions \uplus \set{\goalaction}, \transitions', \sinit[']}$, where
\begin{itemize}
\item $\tilde{\goalstates} = \set{\tilde{\state} \mid \state \in \goalstates}$
    \item $\sinit['] = \widetilde{\sinit} \in \tilde{\goalstates}$ if $\sinit \in \goalstates$ and $\sinit['] = \sinit$ otherwise,
\end{itemize}
 and for $\state,\state' \in \states$, $\tilde{\state} \in \tilde{\goalstates}$, and $\action \in \actions$:
\begin{itemize}
\item $\transitions'(\state,\action,\state') = [\state' \notin \goalstates] \cdot \transitions(\state,\action,\state')$,
\item $\transitions'(\state,\action,\tilde{\state}) = \transitions(\state,\action,\state')$,
\item $\transitions(\tilde{\state}, \goalaction, \tilde{\state}) = 1$, and $\transitions(\cdot, \cdot, \cdot) = 0$ in all other cases,
\item $\obsfunction'(\state) = \obsof{\state}$, and $\obsfunction'(\tilde{\state}) = \mathfrak{g}$.
\end{itemize}
Intuitively, $\pomdp'$ is obtained from $\pomdp$ by creating a copy $\tilde{\state} \in \tilde{\goalstates}$ for all goal states $\state \in \goalstates$ with a new observation $\obsfunction'(\tilde{\state}) = \mathfrak{g}$ and redirecting all incoming transitions of  $\state \in \goalstates$ to $\tilde{\state}$.
We also adapt the reward structure $\rewards$ for $\pomdp$, yielding reward structure $\rewards'$ for $\pomdp'$, where for $\state, \state' \in \states$, $\tilde{\state} \in \tilde{\goalstates}$, and $\action \in \actions$:
\begin{itemize}
    \item $\rewards'(\state,\action,\state') = [\state' \notin \goalstates] \cdot \rewards(\state,\action,\state')$,
    \item $\rewards'(\state,\action, \tilde{\state}) = \rewards(\state,\action,\state')$, and
    \item $\rewards'(\cdot,\cdot,\cdot) = 0$ in all other cases.
\end{itemize}
The POMDP $\pomdp'$, reward structure $\rewards'$ and goal state set $\tilde{\goalstates}$ satisfy \Cref{as:goalobs}.

To relate the values obtained for $\pomdp$ and $\pomdp'$, we consider a mapping $f \colon \scheds{\pomdp} \to \scheds{\pomdp'}$ such that for $\sched\in \scheds{\pomdp}$ and $\finpath' \in \finpaths{\pomdp'}$ we have
\[
f(\sched)(\finpath') = 
\begin{cases}
\sched(\finpath') & \text{if } \finpath \text{ does not visit a state} \tilde{\state} \in \tilde{\goalstates}\\
\set{\goalaction \mapsto 1} & \text{otherwise.}
\end{cases}
\]
The policy $f(\sched)$ is well-defined since
\begin{itemize}
    \item paths of $\pomdp'$ that do not visit $\tilde{\goalstates}$ are also available in $\pomdp$ and
    \item once a path $\finpath'$ of $\pomdp'$ reaches a state $\tilde{\state} \in \tilde{\goalstates}$, the state can not be left again---implying $\last{\finpath'} = \tilde{\state}$ and thus the action $\goalaction$ is always enabled.
\end{itemize}
The mapping $f$ is surjective\footnote{However, $f$ is---in general---not injective: two policies $\sched_1, \sched_2 \in f^{-1}(\sched')$ might (only) differ for paths that have already visited a goal state.} \ie the inverse $f^{-1} \colon \scheds{\pomdp'} \to 2^{(\scheds{\pomdp})}$ with $f^{-1}(\sched') = \set{\sched \in \scheds{\pomdp} \mid f(\sched) = \sched'}$ satisfies $|f^{-1}(\sched')| > 0$.
Moreover, if $\sched$ is observation-based, then $f(\sched)$ is also observation-based.
Thus, the following lemma yields that we can solve \Cref{prob:pomdp} using $\pomdp'$, $\rewards'$ and $\tilde{\goalstates}$ instead of $\pomdp$, $\rewards$, and $\goalstates$.
\begin{lemma}\label{lem:goal}
For all policies $\sched \in \scheds{\pomdp}$ we have $\exprew{\pomdp,\rewards}{\sched}{\goalstates} = \exprew{\pomdp',\rewards'}{f(\sched)}{\tilde{\goalstates}}$.
\end{lemma}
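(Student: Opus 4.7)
The plan is to establish a measure- and reward-preserving correspondence between paths in $\pomdp$ under $\sched$ and paths in $\pomdp'$ under $f(\sched)$, so that the total reward until first reaching the goal coincides path-by-path.

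First, I would define a path mapping $\Phi$: given an infinite path $\infpath = \state_0 \action_1 \state_1 \action_2 \ldots \in \infpaths{\pomdp}$, let $k \in \nn \cup \set{\infty}$ be the least index with $\state_k \in \goalstates$. If $k = \infty$, set $\Phi(\infpath) = \infpath$ (which is also a valid path in $\pomdp'$, since transitions between non-goal states are unchanged). If $k < \infty$, set
$\Phi(\infpath) = \state_0 \action_1 \state_1 \ldots \action_k \tilde{\state}_k \goalaction \tilde{\state}_k \goalaction \tilde{\state}_k \ldots$,
where $\tilde{\state}_k$ is the copy of $\state_k$ in $\tilde{\goalstates}$. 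By construction, $\Phi$ is a bijection between the paths of $\pomdp$ and those paths of $\pomdp'$ whose only visits to $\tilde{\goalstates}$ occur on an absorbing $\goalaction$-tail (and these are the only paths with positive measure under $f(\sched)$, since $f(\sched)$ picks $\goalaction$ deterministically after the first $\tilde{\goalstates}$-visit).

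Second, I would verify that $\Phi$ preserves probabilities on measurable cylinders. By the definition of $\transitions'$, transitions $\state \xrightarrow{\action} \state'$ with $\state' \notin \goalstates$ carry the same probability in $\mdp$ and $\mdp'$, while a transition $\state \xrightarrow{\action} \state' \in \goalstates$ in $\mdp$ is replicated by $\state \xrightarrow{\action} \tilde{\state}'$ in $\mdp'$ with identical probability $\transitions(\state,\action,\state')$. The subsequent $\goalaction$-self-loop in $\tilde{\state}'$ has probability $1$. Since $f(\sched)$ agrees with $\sched$ on every finite prefix that has not yet visited $\tilde{\goalstates}$ (these prefixes lie in $\finpaths{\pomdp}$ as well), and prescribes $\goalaction$ with probability $1$ afterwards, the induced probability of the cylinder generated by any finite prefix $\finpath$ of $\infpath$ agrees with that of $\Phi(\finpath)$ under $\pomdp', f(\sched)$.

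Third, I would check that $\rew{\mdp,\rewards,\goalstates}(\infpath) = \rew{\mdp',\rewards',\tilde{\goalstates}}(\Phi(\infpath))$ for every $\infpath$. For paths that never reach $\goalstates$, both sides sum the rewards of the same transitions between non-goal states, where $\rewards' = \rewards$. For paths reaching the goal at index $k$, the left-hand side accumulates rewards along $\state_0 \action_1 \ldots \action_k \state_k$; the right-hand side accumulates along $\state_0 \action_1 \ldots \action_k \tilde{\state}_k$, followed by the $\goalaction$-tail which contributes $0$. By definition, $\rewards'(\state_{k-1},\action_k,\tilde{\state}_k) = \rewards(\state_{k-1},\action_k,\state_k)$ and all preceding rewards coincide, giving equality.

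Finally, integrating $\rew{\cdot,\cdot,\cdot}$ against the induced path measures yields $\exprew{\pomdp,\rewards}{\sched}{\goalstates} = \exprew{\pomdp',\rewards'}{f(\sched)}{\tilde{\goalstates}}$. The main obstacle I expect is bookkeeping around the measure-theoretic setup: I must argue that $\Phi$ pushes the cylinder $\sigma$-algebra of $\infpaths{\pomdp}$ to (a subset of full measure in) that of $\infpaths{\pomdp'}$, handle paths with $k = \infty$ carefully (especially if rewards are negative so that the sum could diverge), and justify that outside the image of $\Phi$ the measure on $\pomdp'$ under $f(\sched)$ is zero. Once this is done, the equality follows by a straightforward change-of-variables argument.
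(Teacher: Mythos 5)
Your overall strategy is the same as the paper's: both arguments set up a pathwise correspondence that preserves rewards and measure and then integrate. The paper works backwards, decomposing $\infpaths{\pomdp}$ into the common part $\Pi'_{\neg\mathfrak{g}}$ plus the fibers $\Lambda(\infpath')$ over the countably many goal-reaching paths $\infpath'$ of $\pomdp'$; your map $\Phi$ is exactly the forward direction of this, with $\Lambda(\infpath') = \Phi^{-1}(\set{\infpath'})$. Two of your stated facts are false as written, however. First, $\Phi$ is \emph{not} a bijection: every path of $\pomdp$ reaching $\goalstates$ for the first time at index $k$ is truncated there, so all paths sharing that prefix but differing afterwards collapse to the same image, and $\Phi$ is many-to-one on goal-reaching paths. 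Second, and for the same reason, the measure of the cylinder set of a finite prefix $\finpath$ agrees with that of $\Phi(\finpath)$ only when $\finpath$ does not extend beyond its first visit to $\goalstates$; for longer prefixes the left-hand side can be strictly smaller, since the continuation in $\pomdp$ need not be deterministic while the $\goalaction$-tail in $\pomdp'$ is. Neither error is fatal: the change-of-variables formula for a pushforward measure does not require injectivity, and cylinders ending at the first goal visit already determine the pushforward (everything afterwards in $\pomdp'$ is deterministic), which is precisely the identity $\probmeasure{\mdp'}^{f(\sched),\state_0}(\set{\infpath'}) = \probmeasure{\mdp}^{\sched,\state_0}(\Lambda(\infpath'))$ the paper proves. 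Weaken those two claims accordingly and your proof matches the paper's.
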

\begin{proof}
Consider the sets of paths of $\pomdp'$:
$$\Pi'_\mathfrak{g} \colonequals \set{\infpath' \in \infpaths{\pomdp'} \mid \infpath' \text{ visits some state } \tilde{\state} \in \tilde{\goalstates}} $$
and $\Pi'_{\neg \mathfrak{g}} \colonequals \infpaths{\pomdp'} \setminus \Pi'_\mathfrak{g}$.

For $\infpath' = \state_0 \action_1  \dots \action_{n-1} \state_{n-1} \action_{n} \tilde{\state_n}\, \goalaction\, \tilde{\state_n}\, \goalaction \dots  \in \Pi'_\mathfrak{g}$, we define the set
$$\Lambda(\infpath') \colonequals \set{\infpath \in \infpaths{\pomdp} \mid \infpath[n] = \state_0 \action_1  \dots \action_{n-1} \state_{n-1} \action_{n} \state_n \text{ with } \state_n \in \goalstates}.$$
For all $\infpath \in \Lambda(\infpath')$ it holds that
\begin{align*}
\rew{\pomdp',\rewards',\tilde{\goalstates}}(\infpath') 
~=~ \sum_{i=1}^n \rewards(\state_{i-1},\action_i,\state_i) ~=~\rew{\pomdp,\rewards,\goalstates}(\infpath).
\end{align*}
Furthermore, for $\sched \in \scheds{\pomdp}$ we have 
\begin{align*}
    \probmeasure{\mdp'}^{f(\sched), \state_0}(\set{\infpath'})
    = \prod_{i=1}^n  \sched(\infpath'[i-1])(\action_i) \cdot \transitions(\state_{i-1},\action_i,\state_i)
    = \probmeasure{\mdp}^{\sched, \state_0}(\Lambda(\infpath')).
\end{align*}

We observe that the set $\Pi'_\mathfrak{g}$ is countable and that
\[
\infpaths{\pomdp} = \Pi'_{\neg \mathfrak{g}} ~\uplus~ \biguplus_{\infpath' \in \Pi'_{\mathfrak{g}}} \Lambda(\infpath').
\]
For $\sched \in \scheds{\pomdp}$ the following holds which concludes the proof:
\begin{align*}
&~\exprew{\pomdp, \rewards}{\sched}{\goalstates} \\
~=&~ \int_{\infpath \in \infpaths{\pomdp}} \rew{\pomdp, \rewards, \goalstates}(\infpath) \cdot \probmeasure{\pomdp}^{\sched}(d\infpath)\\
~=&~ \int_{\infpath \in \Pi'_{\neg \mathfrak{g}}} \rew{\pomdp, \rewards, \goalstates}(\infpath) \cdot \probmeasure{\pomdp}^{\sched}(d\infpath)~+~ \sum_{\infpath' \in \Pi'_{\mathfrak{g}}} \int_{\infpath \in \Lambda(\infpath')} \rew{\pomdp, \rewards, \goalstates}(\infpath) \cdot \probmeasure{\pomdp}^{\sched}(d\infpath)\\
~=&~ \int_{\infpath \in \Pi'_{\neg \mathfrak{g}}} \rew{\pomdp, \rewards, \goalstates}(\infpath) \cdot \probmeasure{\pomdp}^{\sched}(d\infpath)~+~ \sum_{\infpath' \in \Pi'_{\mathfrak{g}}} \rew{\pomdp', \rewards', \tilde{\goalstates}}(\infpath') \cdot  \int_{\infpath \in \Lambda(\infpath')} \probmeasure{\pomdp}^{\sched}(d\infpath)\\
~=&~ \int_{\infpath \in \Pi'_{\neg \mathfrak{g}}} \rew{\pomdp, \rewards, \goalstates}(\infpath) \cdot \probmeasure{\pomdp}^{\sched}(d\infpath)~+~ \sum_{\infpath' \in \Pi'_{\mathfrak{g}}} \rew{\pomdp', \rewards', \tilde{\goalstates}}(\infpath') \cdot \probmeasure{\pomdp}^{\sched}(\Lambda(\infpath'))\\
~=&~ \int_{\infpath' \in \Pi'_{\neg \mathfrak{g}}} \rew{\pomdp', \rewards', \tilde{\goalstates}}(\infpath') \cdot \probmeasure{\pomdp'}^{f(\sched)}(d\infpath')~+~ \sum_{\infpath' \in \Pi'_{\mathfrak{g}}} \rew{\pomdp', \rewards', \tilde{\goalstates}}(\infpath') \cdot \probmeasure{\pomdp}^{f(\sched)}(\set{\infpath'})\\
~=&~ \int_{\infpath' \in \infpaths{\pomdp'}} \rew{\pomdp', \rewards', \tilde{\goalstates}}(\infpath') \cdot \probmeasure{\pomdp'}^{f(\sched)}(d\infpath') ~=~ \exprew{\pomdp', \rewards'}{f(\sched)}{\tilde{\goalstates}}.
\end{align*}
\qed
\end{proof}

\section{Proofs}\label{app:proof}

We provide proofs for our main results.
We first show an auxiliary lemma that will be helpful for proving some of our claims.

In the following, we slightly abuse notations by considering observation-based policies as functions on observation traces (instead of paths). For observation-based policies this is unambiguous as they behave the same on different paths $\finpath_1, \finpath_2$ with the same observation trace $\obsof{\finpath_1} = \obsof{\finpath_2}$.

For an observation-based policy $\sched \in \obsscheds{\pomdp}$, $\observation \in \observations$, and $\action \in \actions$ let $\schednext$ be the (obsevation-based) policy, where for observation trace $\observation_0 \action_1   \hdots \action_n \observation_n$ we set 
$$\schednext(\observation_0 \action_1   \hdots \action_n \observation_n) \colonequals \sched(\observation \action\observation_0 \action_1   \hdots \action_n \observation_n).$$
In other words, $\schednext$ behaves as $\sched$ after it has observed  $\observation$ and $\action$. 
We consider the $n$-step value function under an observation-based policy $\sched \in \obsscheds{\pomdp}$ which is given by $\valuefunc[n]^{\sched}: \beliefs_\pomdp \to \rr$ , where for $\belief \in \beliefs_\pomdp$ with $\observation = \obsof{\belief}$ we set
\begin{align*}
\valuefunc[0]^{\sched}(\belief) & \colonequals 0 \\
\valuefunc[n]^{\sched}(\belief) & \colonequals \begin{dcases}
0 & \text{if } \belief \in \goalbels \\
\sum_{\action \in \actions} \sched(\observation)(\action) \; \smashoperator[lr]{\sum_{\belief' \in \post[\beliefmdp{\pomdp}]{\belief}{\action}}} \; \belieftransitions(\belief, \action, \belief') \cdot (\beliefrewards(\belief,\action,\belief')+\valuefunc[n-1]^{\schednext}(\belief')) & \text{otherwise}
\end{dcases}
\end{align*}
The (optimal) value function under $\sched$ is given by $\optvaluefunc_\sched \colon \beliefs_\pomdp \mapsto \rr^\infty$ with $$\optvaluefunc_\sched(\belief) \colonequals \lim_{n \to \infty} \valuefunc[n]^\sched(\belief).$$

\begin{lemma}\label{lem:valsched}
For all $\sched \in \obsscheds{\pomdp}$ and $\belief \in \beliefs_\pomdp$ we have
$$ \optvaluefunc_\sched(\belief) ~=~ \sum_{\state \in \supp{\belief}} \belief(\state) \cdot \exprew[\state]{\pomdp,\rewards}{\sched}{\goalstates}.$$
\end{lemma}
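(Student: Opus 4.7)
The natural route is induction on the number of unfolding steps, followed by a limit argument. Concretely, I would define the bounded expected reward
$$\exprew[\state,n]{\pomdp,\rewards}{\sched}{\goalstates} \colonequals \int_{\infpath \in \infpaths{\pomdp}} \rew{\pomdp,\rewards,\goalstates}(\infpath[n]) \cdot \probmeasure{\pomdp}^{\sched,\state}(d\infpath),$$
and establish the stronger statement
$$\valuefunc[n]^\sched(\belief) ~=~ \sum_{\state \in \supp{\belief}} \belief(\state)\cdot \exprew[\state,n]{\pomdp,\rewards}{\sched}{\goalstates} \qquad \text{for all } n\in\nn,~ \belief \in \beliefs_\pomdp.$$
The base case $n=0$ is immediate: $\valuefunc[0]^\sched(\belief)=0$ by definition and no reward is accumulated in zero steps. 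Note also that if $\belief \in \goalbels$ then $\supp{\belief}\subseteq\goalstates$, so $\exprew[\state,n]{\pomdp,\rewards}{\sched}{\goalstates}=0$ on the right-hand side and both sides are zero for every $n$.

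For the inductive step, assume the equality holds for $n-1$ and every belief (in particular for observation-based policies of the form $\schednext$). Fix $\belief \notin \goalbels$ with $\observation \colonequals \obsof{\belief}$. Unfolding $\valuefunc[n]^\sched$, using \Cref{def:beliefMDP,def:beliefRewardStructure}, and summing the two contributions of the definition of $\belieftransitions \cdot \beliefrewards$, the key identity to verify is
$$\belieftransitions(\belief,\action,\nextbelief{\belief}{\action}{\observation'}) \cdot \nextbelief{\belief}{\action}{\observation'}(\state') ~=~ \sum_{\state\in\supp{\belief}} \belief(\state)\cdot [\obsof{\state'}=\observation']\cdot\transitions(\state,\action,\state'),$$
which follows directly from the definitions of $\transitions(\belief,\action,\observation')$ and $\nextbelief{\belief}{\action}{\observation'}$. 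Combining this with the analogous reformulation for $\beliefrewards$ lets one regroup the double sum over $(\observation',\belief')$ as a single sum over successor states $\state'$ in the underlying POMDP. After applying the induction hypothesis to $\valuefunc[n-1]^{\schednext}(\nextbelief{\belief}{\action}{\observation'})$, the resulting expression is exactly $\sum_{\state}\belief(\state)$ times the standard one-step Bellman unfolding of $\exprew[\state,n]{\pomdp,\rewards}{\sched}{\goalstates}$, and the inductive equality follows by linearity.

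Finally I would pass to the limit $n\to\infty$ on both sides. Since by \Cref{def:rewardStructure} all rewards along any path have a fixed sign, the sequences $\rew{\pomdp,\rewards,\goalstates}(\infpath[n])$ are monotone in $n$, so monotone convergence yields $\exprew[\state,n]{\pomdp,\rewards}{\sched}{\goalstates} \to \exprew[\state]{\pomdp,\rewards}{\sched}{\goalstates}$, while the left-hand side tends to $\optvaluefunc_\sched(\belief)$ by definition. Taking limits through the finite sum over $\supp{\belief}$ then yields the claimed identity. The main (mildly) delicate point is the limit step, because $\optvaluefunc_\sched(\belief)$ may be infinite; this is handled by separating the positive and negative reward cases and invoking monotone convergence in each, allowing the value $\infty$ (resp.\ $-\infty$) on both sides simultaneously.
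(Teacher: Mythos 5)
Your proposal is correct and follows essentially the same route as the paper: an induction on $n$ showing that the $n$-step belief value $\valuefunc[n]^\sched(\belief)$ equals the $\belief$-weighted sum of $n$-step state values, driven by exactly the regrouping identity $\belieftransitions(\belief,\action,\nextbelief{\belief}{\action}{\observation'}) \cdot \nextbelief{\belief}{\action}{\observation'}(\state') = \sum_{\state}\belief(\state)\cdot[\obsof{\state'}=\observation']\cdot\transitions(\state,\action,\state')$, followed by passing to the limit. The only (cosmetic) difference is that the paper defines the state-level $n$-step values by a Bellman recursion and cites the standard convergence result, whereas you define them as truncated path integrals and justify the limit explicitly via monotone convergence under the fixed-sign reward assumption.
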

\begin{proof}
Let $\observation \colonequals \obsof{\belief}$.
The proof considers the characterization of $\exprew[\state]{\pomdp,\rewards}{\sched}{\goalstates}$ for POMDP state $\state \in  \supp{\belief} \subseteq \states$ using Bellman equations: for $n \in \nn$, let $W_n^\sched \colon \states \to \rr$ be given by
\begin{align*}
W_0^{\sched}(\state) & \colonequals 0 \\
W_n^\sched(\state) & \colonequals \begin{dcases}
0 & \text{if } \state \in \goalstates \\
\sum_{\action \in \actions} \sched(\observation)(\action) \; \sum_{\state' \in \states} \transitions(\state, \action, \state') \cdot (\rewards(\state,\action,\state') \cdot W_{n-1}^{\schednext}(\state')) & \text{otherwise.}
\end{dcases}
\end{align*}
It is well-known (e.g., \cite{puterman1994}) that $\exprew[\state]{\pomdp,\rewards}{\sched}{\goalstates} = \lim_{n\to\infty} W_n^\sched(\state)$. Thus,
\begin{align*}
     &~\optvaluefunc_\sched(\belief) ~=~ \sum_{\state \in \supp{\belief}} \belief(\state) \cdot \exprew[\state]{\pomdp,\rewards}{\sched}{\goalstates}\\
     \iff&~\lim_{n \to \infty} \valuefunc[n]^\sched(\belief) ~=~ \sum_{\state \in \supp{\belief}} \belief(\state) \cdot \lim_{n\to\infty} W_n^\sched(\state)\\
    \iff&~\lim_{n \to \infty} \valuefunc[n]^\sched(\belief) ~=~ \lim_{n\to\infty} \sum_{\state \in \supp{\belief}} \belief(\state) \cdot  W_n^\sched(\state).
\end{align*}
We prove the lemma by showing that for all $\sched \in \obsscheds{\pomdp}$ and $n \in \nn$ we have $\valuefunc[n]^\sched(\belief) =  \sum_{\state \in \supp{\belief}} \belief(\state) \cdot  W_n^\sched(\state)$. This is shown by induction.

\paragraph{Base case ($n=0$)}
We have that $\valuefunc[0]^{\sched}(\belief) = 0 = \sum_{\state \in \states} \belief(\state) \cdot W_0^\sched(\state)$.

\paragraph{Induction hypothesis} For fixed $n$ and all $\sched \in \obsscheds{\pomdp}$ we have
$\valuefunc[n]^\sched(\belief) =  \sum_{\state \in \supp{\belief}} \belief(\state) \cdot  W_n^\sched(\state)$.

\paragraph{Induction step ($n \to n+1$)}
If $\belief \in \goalbels$, we have $\supp{\belief} \subseteq \goalstates$ and thus
$$\valuefunc[n+1]^{\sched}(\belief) = 0 = \sum_{\state \in \states} \belief(\state) \cdot W_{n+1}^\sched(\state).$$
For $\belief \notin \goalbels$ we get:
\begin{align*}
&\hphantom{=} \valuefunc[n+1]^{\sched}(\belief) \\
& = \sum_{\action \in \actions} \sched(\observation)(\action) \; \smashoperator[lr]{\sum_{\belief' \in \post[\beliefmdp{\pomdp}]{\belief}{\action}}} \; \belieftransitions(\belief, \action, \belief') \cdot (\beliefrewards(\belief,\action,\belief')+\valuefunc[n]^{\schednext}(\belief'))\\
& = \sum_{\action \in \actions} \sched(\observation)(\action) \sum_{\observation' \in \observations} \transitions(\belief, \action, \observation') \cdot (\beliefrewards(\belief,\action,\nextbelief{\belief}{\action}{\observation'})+\valuefunc[n]^{\schednext}(\nextbelief{\belief}{\action}{\observation'}))\\
&\overset{(IH)}{=} \sum_{\action \in \actions} \sched(\observation)(\action) \sum_{\observation' \in \observations} \transitions(\belief, \action, \observation') \cdot (\beliefrewards(\belief,\action,\nextbelief{\belief}{\action}{\observation'})+ \sum_{\state' \in \states}\nextbelief{\belief}{\action}{\observation'}(\state') \cdot W_n^{\schednext}(\state'))\\
& = \sum_{\action \in \actions} \sched(\observation)(\action) \sum_{\observation' \in \observations} \transitions(\belief, \action, \observation') \cdot \left(\frac{\sum_{\state \in \states} \belief(\state) \cdot \sum_{\state' \in \states}[\obsof{s'} = \observation'] \cdot \rewards(\state,\action,\state')\cdot \transitions(\state,\action,\state')}{\transitions(\belief,\action,\observation')} \right. \\
& \left. \hphantom{\sum_{\action \in \actions} \sched(\observation)(\action) \sum_{\observation' \in \observations} \transitions(\belief, \action, \observation') \cdot \left(\right.} + \sum_{\state' \in \states} \frac{[\obsof{\state'} = \observation'] \cdot \sum_{\state \in \states} \belief(\state) \cdot \transitions(\state,\action,\state')}{\transitions(\belief,\action,\observation')} \cdot W_n^{\schednext}(\state')\right)\\
& = \sum_{\action \in \actions} \sched(\observation)(\action) \sum_{\observation' \in \observations} \left(\sum_{\state \in \states} \belief(\state) \cdot \sum_{\state' \in \states}[\obsof{s'} = \observation'] \cdot \rewards(\state,\action,\state')\cdot \transitions(\state,\action,\state')\right. \\
& \left. \hphantom{\sum_{\action \in \actions} \sched(\observation)(\action) \sum_{\observation' \in \observations} \left(\sum_{\state \in \states}\right.} + \sum_{\state' \in \states} [\obsof{\state'} = \observation'] \cdot \sum_{\state \in \states} \belief(\state) \cdot \transitions(\state,\action,\state') \cdot W_n^{\schednext}(\state')\right)\\
& = \sum_{\action \in \actions} \sched(\observation)(\action) \sum_{\observation' \in \observations} \sum_{\state \in \states} \sum_{\state' \in \states}\belief(\state) \cdot [\obsof{s'} = \observation'] \cdot \transitions(\state,\action,\state') \cdot (\rewards(\state,\action,\state') + W_n^{\schednext}(\state'))\\
& = \sum_{\action \in \actions} \sched(\observation)(\action) \sum_{\state \in \states} \sum_{\state' \in \states}\belief(\state) \cdot \transitions(\state,\action,\state') \cdot (\rewards(\state,\action,\state') + W_n^{\schednext}(\state'))\\\\
& = \sum_{\action \in \actions} \sched(\observation)(\action) \sum_{\state \in \states} \belief(\state) \cdot \sum_{\state' \in \states}\transitions(\state,\action,\state') \cdot (\rewards(\state,\action,\state') + W_n^{\schednext}(\state'))\\
& = \sum_{\state \in \states} \belief(\state) \cdot W_{n+1}^\sched(\state).
\end{align*}
\qed
\end{proof}

\subsection{Proof of \Cref{lem:precompValueFunc}}
\lemPrecompValueFunc*
\begin{proof}
Using \Cref{lem:valsched} we get
\begin{align*}
\underapprox(\belief) &= \optvaluefunc_\sched(\belief) \le \sup_{\sched' \in \obsscheds{\pomdp}} \optvaluefunc_{\sched'}(\belief) = \optvaluefunc(\belief).
\end{align*}
\qed
\end{proof}

\subsection{Proof of \Cref{thm:MILP}}
\thmMILP*
\begin{proof}
Let $f$ be an optimal solution such that for some $\tilde{\belief} \in \candidatebeliefset$ we have $f(\beliefselectvar{\tilde{\belief}}) = 1$ and $f(\clippingvalue[]) < 1$. Due to \Cref{eq:exactlyOne} we have that $\sum_{\belief' \in \candidatebeliefset} f(\beliefselectvar{\belief'}) = 1$ and therefore $f(\beliefselectvar{\belief'}) = 0$ for all $\belief' \in \candidatebeliefset\setminus\{\tilde{\belief}\}$. 
For $\belief' \in \candidatebeliefset\setminus\{\tilde{\belief}\}$, \Cref{eq:linClip} reduces to 
\begin{align*}
   && f(\stateclippingvalue[\state]) & \geq \underbrace{\belief(\state)}_{\leq 1} - \underbrace{(1-f(\clippingvalue[]))}_{\ge 0} \cdot \underbrace{\belief'(\state)}_{\ge 0} - (1 - 0).
\end{align*}
As the right hand side of this constraint is at most 0, it is already implied by \Cref{eq:stateclipbnds}, \ie \Cref{eq:linClip} does not further constrain the value of $\stateclippingvalue[\state]$.
For $\tilde{\belief}$ and \Cref{eq:linClip}, we get:
\begin{align*}
f(\stateclippingvalue[\state]) & \geq \belief(\state) - (1-f(\clippingvalue[])) \cdot \tilde{\belief}(\state).
\end{align*}
As we minimise, we know that either 
\begin{itemize}
    \item $f(\stateclippingvalue[\state])\geq 0$ or
    \item $f(\stateclippingvalue[\state]) \geq \belief(\state) - (1-f(\clippingvalue[])) \cdot \tilde{\belief}(\state)$
\end{itemize}
or both inequalities must hold with equality.
We show that $f(\stateclippingvalue[\state]) = \belief(\state) - (1-f(\clippingvalue[]))  \cdot \tilde{\belief}(\state)$ must hold for all states $\state \in \supp{\belief}$.
Towards a contradiction, assume there is at least one state $\state \in \supp{\belief}$ for which the constraint does \emph{not} hold with equality, \ie
\begin{equation*}
 \tag{I} f(\stateclippingvalue[\state]) >  \belief(\state) - (1-f(\clippingvalue[]))  \cdot \tilde{\belief}(\state)
\end{equation*}
and for all $\state' \neq \state$:
\begin{equation*}
\tag{II} f(\stateclippingvalue[\state']) \geq  \belief(\state') - (1-f(\clippingvalue[]))  \cdot \tilde{\belief}(\state')
\end{equation*}
From (I) and (II), we get
\begin{align*}
&&f(\clippingvalue[]) & = \sum_{\state' \in \supp{\belief}}f(\stateclippingvalue[\state']) &&\qquad\text{(Constr. \ref{eq:sum})}\\ 
&& & > \sum_{\state' \in \supp{\belief}}\Big( \belief(\state) - (1-f(\clippingvalue[]))  \cdot \tilde{\belief}(\state)\Big) &&\qquad\text{(I, II)}\\
&& & = \underbrace{\sum_{\state' \in \supp{\belief}}\belief(\state')}_{=1} ~-~ (1-f(\clippingvalue[])) ~ \cdot  \underbrace{\sum_{\state' \in \supp{\belief}}\tilde{\belief}(\state')}_{=1}  \\
&& & = 1 - (1-f(\clippingvalue[]))  \cdot 1 ~=~  f(\clippingvalue[]) \\
\implies && f(\clippingvalue[]) & > f(\clippingvalue[]) \quad \mathbf{\lightning}
\end{align*}
As the assumption leads to a contradiction, we know that for all $\state \in \supp{\belief}$, it holds that
\begin{equation*}
f(\stateclippingvalue[\state]) =\belief(\state) - (1-f(\clippingvalue[]))  \cdot \tilde{\belief}(\state).
\end{equation*}
We get 
\begin{align*}
    && f(\stateclippingvalue[\state]) & = \belief(\state) - (1-f(\clippingvalue[]))  \cdot \tilde{\belief}(\state)\\
    \iff && (1-f(\clippingvalue[])) \cdot \tilde{\belief}(\state) & = \belief(\state) - f(\stateclippingvalue[\state])\\
    \iff && \tilde{\belief}(\state) & = \frac{\belief(\state) - f(\stateclippingvalue[\state])}{1-f(\clippingvalue[])}. && \qquad (f(\clippingvalue[]) < 1)
\end{align*}
Thus, $\tilde{\belief}$ is an adequate clipping candidate for $\belief$ with clipping value $f(\clippingvalue[])$. As we minimise $f(\clippingvalue[])$, the clipping value is minimal, \ie $f(\clippingvalue[]) = \clippingvalue$.

For the other direction, let $\tilde{\belief}$ be an adequate clipping candidate for $\belief$ with minimal clipping value, \ie there are state clipping values given by a belief clip $\stateclippingvalue \in \mathcal{C}(\belief,\tilde{\belief})$ with $\tilde{\belief} = (\belief \ominus \stateclippingvalue)$. Furthermore, for all $\belief' \in \candidatebeliefset \setminus \set{\tilde{\belief}}$ we have that either $\belief'$ is not adequate or $\clippingvalue[\belief \to \belief'] \ge \clippingvalue$.
We claim that $f$ is an optimal solution to the MILP, where
\begin{itemize}
\item  $f(\beliefselectvar{\tilde{\belief}}) \colonequals 1$ and  $f(\beliefselectvar{\belief'}) \colonequals 0$ for all $\belief' \in \candidatebeliefset \setminus \set{\tilde{\belief}}$,
\item $f(\clippingvalue[]) \colonequals \clippingvalue$, and
\item $f(\stateclippingvalue[\state]) \colonequals \stateclippingvalue(\state)$ for all $\state \in \supp{\belief}$.
\end{itemize}
$f$ is a feasible solution as we can verify that \Crefrange{eq:exactlyOne}{eq:linClip} hold.
To show that $f$ is also an optimal solution, assume towards a contradiction that another solution $f'$ with $f'(\clippingvalue[]) < f(\clippingvalue[]$ exists. We have already argued above that for $\belief' \in \candidatebeliefset$ with $f'(\beliefselectvar{\belief'}) = 1$ this yields
\[
\clippingvalue[\belief \to \belief']~=~f'(\clippingvalue[]) < f(\clippingvalue[])~=~\clippingvalue
\]
which contradicts the assumption that the clipping value for $\tilde{\belief}$ is minimal.
\qed
\end{proof}

\subsection{Proof of \Cref{lem:clip}}\label{app:clip}
Given two beliefs $\belief$ and $\tilde{\belief}$ with the same observation, recall from \Cref{def:beliefclip,def:clippingvalue} that $\forall \state \in \supp{\belief}\colon \stateclippingvalue(\state) \in [0,\belief(s)]$ with $\clippingvalue \colonequals \sum_{\state \in \supp{\belief}} \stateclippingvalue(\state) < 1 $ and $\forall \state \in \supp{\belief}$:
\[
\frac{\belief(\state) - \stateclippingvalue(\state)}{1-\clippingvalue} = \tilde{\belief}(\state)
\quad \iff \quad
\belief(\state) = (1-\clippingvalue) \cdot \tilde{\belief}(\state) + \stateclippingvalue(\state).
\]
Furthermore, $
\forall \state \in \supp{\belief}\colon\, \stateinfimum(\state) = \exprew[\state]{\mdp, \rewards}{\min}{\goalstates}$.
In particular, $\stateinfimum(\state) \le \exprew[\state]{\mdp, \rewards}{\sched}{\goalstates}$ holds for all policies $\sched \in \obsscheds{\pomdp}$.
\lemClip*
\begin{proof}
Using \Cref{lem:valsched}, we get for every observation-based policy $\sched \in \obsscheds{\pomdp}$ that:
\begin{align*}
    \optvaluefunc(\belief) &\ge \optvaluefunc_\sched(\belief) \\
    &= \smashoperator[r]{\sum_{\state \in \supp{\belief}}} \belief(\state) \cdot \exprew[\state]{\pomdp,\rewards}{\sched}{\goalstates} \\ 
    &= \smashoperator[r]{\sum_{\state \in \supp{\belief}}} ((1-\clippingvalue) \cdot \tilde{\belief}(\state) + \stateclippingvalue(\state)) \cdot \exprew[\state]{\pomdp,\rewards}{\sched}{\goalstates} \\
    &= (1-\clippingvalue) \cdot \Big(\smashoperator[r]{\sum_{\state \in \supp{\belief}}} \tilde{\belief}(\state) \cdot \exprew[\state]{\pomdp,\rewards}{\sched}{\goalstates}\Big) + \\
    & \qquad \qquad \Big(\smashoperator[r]{\sum_{\state \in \supp{\belief}}} \stateclippingvalue(\state) \cdot \exprew[\state]{\pomdp,\rewards}{\sched}{\goalstates}\Big)\\
    &= (1-\clippingvalue) \cdot \optvaluefunc_\sched(\tilde{\belief}) + \smashoperator[r]{\sum_{\state \in \supp{\belief}}} \stateclippingvalue(\state) \cdot \exprew[\state]{\pomdp,\rewards}{\sched}{\goalstates}\\
    &\ge (1-\clippingvalue) \cdot \optvaluefunc_\sched(\tilde{\belief}) + \smashoperator[r]{\sum_{\state \in \supp{\belief}}} \stateclippingvalue(\state) \cdot \stateinfimum(\state).
\end{align*}
Therefore,
\begin{align*}
\optvaluefunc(\belief) 
&\ge \sup_{\sched \in \obsscheds{\pomdp}} \Big( (1-\clippingvalue) \cdot \optvaluefunc_\sched(\tilde{\belief}) + \smashoperator[r]{\sum_{\state \in \supp{\belief}}} \stateclippingvalue(\state) \cdot \stateinfimum(\state)\Big)\\
&\ge (1-\clippingvalue) \cdot \sup_{\sched \in \obsscheds{\pomdp}} \optvaluefunc_\sched(\tilde{\belief}) + \smashoperator[r]{\sum_{\state \in \supp{\belief}}} \stateclippingvalue(\state) \cdot \stateinfimum(\state)\\
&\ge (1-\clippingvalue) \cdot \optvaluefunc(\tilde{\belief}) + \smashoperator[r]{\sum_{\state \in \supp{\belief}}} \stateclippingvalue(\state) \cdot \stateinfimum(\state).
\end{align*}
\qed
\end{proof}

\subsection{Proof of \Cref{thm:clippingUnderapprox}}
We first argue that the following $n$-step variant of \Cref{lem:clip} holds for $n \in \nn$.
Let the minimal $n$-step value function on the underlying MDP $\mdp$ be given by $\stateinfimum_n \colon \states \to \rr$ with $\stateinfimum_0(\state)  \colonequals 0$ and for $n > 0$:
\begin{align*}
\stateinfimum_n(\state) \colonequals \begin{dcases}
0 & \text{if } \state \in \goalstates \\
\min_{\action \in \actions(\state)} \sched(\observation)(\action) \; \sum_{\state' \in \states} \transitions(\state, \action, \state') \cdot (\rewards(\state,\action,\state') \cdot \stateinfimum_{n-1}(\state')) & \text{otherwise.}
\end{dcases}
\end{align*}
\begin{lemma}\label{lem:clipn}
For belief $\belief$ and adequate clipping candidate $\tilde{\belief}$ with clipping values $\clippingvalue$ and $\stateclippingvalue(\state)$ ($\state \in \supp{\belief}$) we have
$$
(1-\clippingvalue) \cdot \valuefunc[n](\tilde{\belief}) + \clippingvalue \cdot  \sum_{\state \in \supp{\belief}} \frac{\stateclippingvalue(\state)}{\clippingvalue} \cdot \stateinfimum_{n}(\state) ~\le~ \valuefunc[n](\belief).
$$
\end{lemma}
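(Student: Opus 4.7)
The plan is to mirror the proof of \Cref{lem:clip} step by step, substituting each infinite-horizon ingredient by its $n$-step counterpart. The key fact used there was the identity $\optvaluefunc_\sched(\belief) = \sum_{\state \in \supp{\belief}} \belief(\state) \cdot \exprew[\state]{\pomdp,\rewards}{\sched}{\goalstates}$ from \Cref{lem:valsched}. The analogous finite-horizon identity is $\valuefunc[n]^\sched(\belief) = \sum_{\state \in \supp{\belief}} \belief(\state) \cdot W_n^\sched(\state)$, where $W_n^\sched$ is the $n$-step state-level value under $\sched$ introduced in the proof of \Cref{lem:valsched}. This identity is in fact \emph{exactly} the inductive invariant already verified in that proof, so it can be invoked directly without repeating the induction.

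With this identity in hand, I would pick an observation-based policy $\sched$ that attains the $n$-step maximum at $\tilde{\belief}$, i.e. $\valuefunc[n]^\sched(\tilde{\belief}) = \valuefunc[n](\tilde{\belief})$ (such a policy exists because at a finite horizon the outer $\max_\action$ in \Cref{def:valueFunction} is attained, and the maximising actions depend only on the current observation so they can be encoded as an observation-based policy). Using the clip identity $\belief(\state) = (1-\clippingvalue)\tilde{\belief}(\state) + \stateclippingvalue(\state)$, the computation proceeds as
\begin{align*}
\valuefunc[n](\belief) \;\ge\; \valuefunc[n]^\sched(\belief)
&\;=\; \sum_{\state \in \supp{\belief}} \belief(\state) \cdot W_n^\sched(\state) \\
&\;=\; \sum_{\state \in \supp{\belief}} \bigl((1-\clippingvalue)\tilde{\belief}(\state) + \stateclippingvalue(\state)\bigr) \cdot W_n^\sched(\state) \\
&\;=\; (1-\clippingvalue) \cdot \valuefunc[n]^\sched(\tilde{\belief}) \;+\; \sum_{\state \in \supp{\belief}} \stateclippingvalue(\state) \cdot W_n^\sched(\state).
\end{align*}
The first summand equals $(1-\clippingvalue) \cdot \valuefunc[n](\tilde{\belief})$ by the choice of $\sched$. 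For the second, I would use $W_n^\sched(\state) \ge \stateinfimum_n(\state)$, together with $\clippingvalue \cdot \sum_{\state} (\stateclippingvalue(\state)/\clippingvalue) \cdot \stateinfimum_n(\state) = \sum_\state \stateclippingvalue(\state) \cdot \stateinfimum_n(\state)$, to obtain the claim.

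The main obstacle is the inequality $W_n^\sched(\state) \ge \stateinfimum_n(\state)$. On the infinite-horizon side this was immediate from the fact that $\stateinfimum(\state)$ is a minimum over \emph{all} MDP policies of which every observation-based POMDP policy is a special case. Here one has to verify the same at each finite horizon, which I would do by a short induction on $n$: the base case $n=0$ is trivial as both quantities are $0$ (and $0$ on goal states), and the step follows by noting that for non-goal $\state$ the recursive definition of $\stateinfimum_n$ takes the $\min$ over $\action \in \actions(\state)$ while $W_n^\sched$ averages the same successor expression against $\sched$. A secondary piece of bookkeeping is the implicit claim $\valuefunc[n](\belief) = \max_{\sched \in \obsscheds{\pomdp}} \valuefunc[n]^\sched(\belief)$ used in the first inequality above; this is also a straightforward induction on $n$, since the maximiser in \Cref{def:valueFunction} depends only on $\obsof{\belief}$ and can be assembled into a finite-memory observation-based policy. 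Once these two standard inductions are in place, the rest is a direct transcription of the proof of \Cref{lem:clip}.
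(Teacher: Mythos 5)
Your proposal is correct and matches the paper's own proof, which likewise recalls the $n$-step identity $\valuefunc[n]^{\sched}(\belief) = \sum_{\state \in \supp{\belief}} \belief(\state) \cdot W_n^{\sched}(\state)$ from the proof of \Cref{lem:valsched}, notes $\stateinfimum_n(\state) \le W_n^{\sched}(\state)$ for all $\sched$, and then transcribes the computation of \Cref{lem:clip}; the only cosmetic difference is that you fix a policy attaining the $n$-step maximum at $\tilde{\belief}$ where the paper's template derives the bound for every $\sched$ and takes the supremum at the end. One nitpick: the $n$-step maximising action depends on the current belief (equivalently, on the observation trace from $\tilde{\belief}$ and the remaining horizon), not merely on the current observation as you state, but such a choice is still realisable by an observation-based policy, so your argument stands.
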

\begin{proof}
Recall from the proof of \Cref{lem:valsched} that
$$\valuefunc[n]^\sched(\belief) =  \sum_{\state \in \supp{\belief}} \belief(\state) \cdot  W_n^\sched(\state).$$
Furthermore, we have $\stateinfimum_n(\state) \le W_n^\sched(\state)$ for all $\sched$.
With this in mind, the proof is analogous to the proof of \Cref{lem:clip}.
\qed
\end{proof}

\clippingUnderapprox*
\begin{proof}
Let $\clippingmdp{\pomdp}$ and $\clippingrewards$ be the clipping belief MDP and reward structure returned by \Cref{alg:clippingMDP}.
As mentioned in \Cref{sec:algorithm}, we assume that there are no ``$\clippingaction$-cycles'', \ie paths of the form $\belief_1\, \clippingaction\, \dots\, \clippingaction \,\belief_n\, \clippingaction\, \belief_1$ in $\clippingmdp{\pomdp}$.

Since $\clippingmdp{\pomdp}$ is a finite MDP, it suffices to show
$$
\exprew[\belief]{\clippingmdp{\pomdp},\clippingrewards}{\sched}{\cutgoals} ~\le~ \optvaluefunc(\belief)
$$
for every $\belief \in \clippingstates \setminus \set{\bcut}$ and every memoryless, deterministic policy $\sched \colon \clippingstates \to \actions \uplus \set{\goalaction, \cutaction, \clippingaction}$.
We now fix such a policy $\sched$ for $\clippingmdp{\pomdp}$ and define the function $\mathcal{V}^\sched_n \colon \beliefs_\pomdp \to \rr^\infty$ for $n \in \nn$ as follows.
We set $\mathcal{V}^\sched_0(\belief) \colonequals 0$. For $n > 0$, distinguish the following cases for $\belief \in \clippingstates \setminus \set{\bcut}$.
\begin{itemize}
    \item If $\sched(\belief) = \action \in \actions$, then $\mathcal{V}^\sched_n(\belief) \colonequals \sum_{\belief' \in \clippingstates} \belieftransitions(\belief, \action, \belief') \cdot (\beliefrewards(\belief, \action, \belief') \cdot \mathcal{V}^\sched_{n-1}(\belief'))$.
    \item If $\sched(\belief) = \goalaction$ (\ie $\belief \in \goalbels$), then $\mathcal{V}^\sched_n(\belief) \colonequals  0$.
    \item If $\sched(\belief) = \cutaction$, then $\mathcal{V}^\sched_n(\belief) \colonequals \min(\underapprox(\belief), \valuefunc[n](\belief))$.
    \item If $\sched(\belief) = \clippingaction$, then there is a clipping candidate $\tilde{\belief}$  with clipping values $\clippingvalue$ and $\stateclippingvalue(\state)$ ($\state \in \supp{\belief}$). We set $\mathcal{V}^\sched_n(\belief) \colonequals (1-\clippingvalue) \cdot \mathcal{V}^\sched_n(\tilde{\belief}) + \clippingvalue \cdot \sum_{\state \in \supp{\belief}} \nicefrac{\stateclippingvalue(\state)}{\clippingvalue} \cdot \stateinfimum_{n}(\state)$.
    Due to the absence of $\clippingaction$-cycles, this case is well-defined---even though the step counter $n$ is not decremented.
\end{itemize}
Let $\mathcal{V}^*_\sched(\belief) \colonequals \lim_{n\to\infty} \mathcal{V}^\sched_n(\belief)$.
Note that for a belief $\belief$ with $\sched(\belief) = \cutaction$ we get $$\mathcal{V}^*_\sched(\belief) = \lim_{n\to\infty} \min(\underapprox(\belief), \valuefunc[n](\belief)) = \min(\underapprox(\belief), \optvaluefunc(\belief)) = \underapprox(\belief).$$
By construction of $\clippingmdp{\pomdp}$, it follows that $\mathcal{V}^*_\sched$ provides a solution to the classical Bellman equations for $\clippingmdp{\pomdp}$, \ie we have for all $\belief \in \clippingstates \setminus \set{\bcut}$:
\[
\mathcal{V}^*_\sched(\belief) = \exprew[\belief]{\clippingmdp{\pomdp},\clippingrewards}{\sched}{\cutgoals}.
\]
To prove \Cref{thm:clippingUnderapprox}, it thus suffices to show that $\lim_{n \to \infty} \mathcal{V}^\sched_n(\belief) \le \lim_{n \to \infty} \valuefunc[n](\belief)$.
We conclude the proof by showing that for all $n \in \nn$ and $\belief \in \clippingstates \setminus \set{\bcut}$ we have $\mathcal{V}^\sched_n(\belief) \le \valuefunc[n](\belief)$. The proof is by induction.

\paragraph{Base case ($n=0$)}
We have that $\mathcal{V}^\sched_0(\belief) = 0 = \valuefunc[0](\belief)$.

\paragraph{Induction hypothesis} For fixed $n$ we have
$\mathcal{V}^\sched_n(\belief) \le \valuefunc[n](\belief)$.

\paragraph{Induction step ($n \to n+1$)}
There are four different cases.
\begin{itemize}
    \item If $\sched(\belief) = \action \in \actions$, then
    \begin{align*}
    \mathcal{V}^\sched_{n+1}(\belief) &= \sum_{\belief' \in \clippingstates} \belieftransitions(\belief, \action, \belief') \cdot (\beliefrewards(\belief, \action, \belief') \cdot \mathcal{V}^\sched_{n}(\belief')) \\
    & \le  \sum_{\belief' \in \clippingstates} \belieftransitions(\belief, \action, \belief') \cdot (\beliefrewards(\belief, \action, \belief') \cdot \valuefunc[n](\belief')) = \valuefunc[n+1](\belief).
    \end{align*}
    \item If $\sched(\belief) = \goalaction$ (\ie $\belief \in \goalbels$), then $$\mathcal{V}^\sched_{n+1}(\belief) = 0 = \valuefunc[n+1](\belief).$$
    \item If $\sched(\belief) = \cutaction$, then
    $$\mathcal{V}^\sched_{n+1}(\belief) = \min(\underapprox(\belief), \valuefunc[n+1](\belief)) \le \valuefunc[n+1](\belief).$$
    \item If $\sched(\belief) = \clippingaction$, then consider the path $\belief_m\,\clippingaction\,\belief_{m-1}\,\clippingaction\,\dots\,\clippingaction\,\belief_0$ in $\clippingmdp{\pomdp}$ with $\belief = \belief_m$, $\belief_k \in \clippingstates \setminus \set{\bcut}$ for all $k \ge 0$, $\sched(\belief_k) = \clippingaction$ for all $k > 0$, and $\sched(\belief_0) \neq \clippingaction$. 
    Such a path exists because $\clippingmdp{\pomdp}$ is finite and we do not allow $\clippingaction$-cycles. Furthermore, the path is unique since the action $\clippingaction$ only considers a single successor belief (apart from $\bcut$).
    We show $\mathcal{V}^\sched_{n+1}(\belief_k) \le \valuefunc[n+1](\belief_k)$ using induction over $k$.
    
    Considering $k=0$, we have $\sched(\belief_0) \in \actions \uplus \set{\goalaction, \cutaction}$ and thus the other cases above already yield $\mathcal{V}^\sched_{n+1}(\belief_0) \le \valuefunc[n+1](\belief_0)$.
    
    Now assume $\mathcal{V}^\sched_{n+1}(\belief_k) \le \valuefunc[n+1](\belief_k)$ for a fixed $k$. $\belief_k$ is an adequate clipping candidate for $\belief_{k+1}$ with clipping values $\clippingvalue[] \colonequals \clippingvalue[\belief_{k+1}\to\belief_k]$ and $\stateclippingvalue[](\state) \colonequals \stateclippingvalue[\belief_{k+1}\to\belief_k](\state)$ for $\state \in \supp{\belief_{k+1}}$. We get
        \begin{align*}
            \mathcal{V}^\sched_{n+1}(\belief_{k+1}) 
            &= (1-\clippingvalue[]) \cdot \mathcal{V}^\sched_{n+1}(\belief_k) + \clippingvalue[] \cdot \sum_{\state \in \supp{\belief_{k+1}}} \nicefrac{\stateclippingvalue[](\state)}{\clippingvalue[]} \cdot \stateinfimum_{n+1}(\state)\\
            &\le  (1-\clippingvalue[]) \cdot \valuefunc[n+1](\belief_k) + \clippingvalue[] \cdot \sum_{\state \in \supp{\belief_{k+1}}} \nicefrac{\stateclippingvalue[](\state)}{\clippingvalue[]} \cdot \stateinfimum_{n+1}(\state)\\
            &\le \valuefunc[n+1](\belief_{k+1}).
        \end{align*}
    The last inequality is due to \Cref{lem:clipn}.\qed
\end{itemize}
\end{proof}

\subsection{Proof of \Cref{thm:clippingMDP}}
\corollaryClippingMDP*
\begin{proof}
From \Cref{thm:clippingUnderapprox} it follows that
\begin{align*}
	\exprew{\clippingmdp{\pomdp},\clippingrewards}{\max}{\cutgoals} 
	& = \exprew[\binit]{\clippingmdp{\pomdp},\clippingrewards}{\max}{\cutgoals} \\
	& \leq \exprew[\binit]{\beliefmdp{\pomdp},\beliefrewards}{\max}{\goalbels} \\
	& = \exprew{\beliefmdp{\pomdp},\beliefrewards}{\max}{\goalbels}.
\end{align*}
\qed
\end{proof}

\end{document}